
\documentclass[11pt,a4paper]{article}
\usepackage[utf8x]{inputenc}
\usepackage[T1]{fontenc}

\usepackage[pdftex]{graphicx} 
\usepackage[english]{babel} 
\usepackage[pdftex,linkcolor=black,pdfborder={0 0 0}]{hyperref} 
\usepackage{calc} 
\usepackage{enumitem} 

\linespread{0.8} 
\usepackage[a4paper, lmargin=0.16\paperwidth, rmargin=0.16\paperwidth, tmargin=0.1\paperheight, bmargin=0.1\paperheight]{geometry} 

\usepackage[all]{nowidow} 
\usepackage[protrusion=true,expansion=true]{microtype} 

\usepackage{amsmath, amsfonts, amsthm, mathtools}
\newtheorem{theorem}{Theorem}
\newtheorem{lemma}{Lemma}
\newtheorem*{statement}{\bf Statement}

\allowdisplaybreaks

\DeclareMathOperator\erf{erf}
\DeclareMathOperator\TV{\mathsf{TV}}
\DeclareMathOperator\mean{mean}

\setlength{\parindent}{0pt}
\setlength{\parskip}{0.3em}

\usepackage{gensymb}

\usepackage{graphicx}
\graphicspath{ {./Figures/} }

\usepackage{caption}
\usepackage{subcaption}

\usepackage[numbers, compress]{natbib}
\bibliographystyle{unsrtnat}

\usepackage{fancyhdr}
\fancypagestyle{pgstyle}{\fancyhf{}\fancyfoot[L]{\small An updated version of this work titled `Provable Robustness against Wasserstein Distribution Shifts via Input Randomization' is available on OpenReview: \url{https://openreview.net/forum?id=HJFVrpCaGE}.}}



\begin{document} %

\title{Certifying Model Accuracy under Distribution Shifts}
\date{} 

\author{
    Aounon Kumar\\
    University of Maryland\\
    \texttt{aounon@umd.edu}\\
    \and
    Alexander Levine\\
    University of Maryland\\
    \texttt{alevine0@cs.umd.edu}\\
    \and
    Tom Goldstein\\
    University of Maryland\\
    \texttt{tomg@cs.umd.edu}\\
    \and
    Soheil Feizi\\
    University of Maryland\\
    \texttt{sfeizi@cs.umd.edu}\\
}

\maketitle
\thispagestyle{pgstyle}

\begin{abstract}
Certified robustness in machine learning has primarily focused on adversarial perturbations of the input with a fixed attack budget for each point in the data distribution. In this work, we present provable robustness guarantees on the accuracy of a model under bounded Wasserstein shifts of the data distribution. We show that a simple procedure that randomizes the input of the model within a transformation space is provably robust to distributional shifts under the transformation. Our framework allows the datum-specific perturbation size to vary across different points in the input distribution and is general enough to include fixed-sized perturbations as well. Our certificates produce guaranteed lower bounds on the performance of the model for any (natural or adversarial) shift of the input distribution within a Wasserstein ball around the original distribution. We apply our technique to: (i) certify robustness against natural (non-adversarial) transformations of images such as color shifts, hue shifts and changes in brightness and saturation, (ii) certify robustness against adversarial shifts of the input distribution, and (iii) show provable lower bounds (hardness results) on the performance of models trained on so-called ``unlearnable'' datasets that have been poisoned to interfere with model training.
Code for our experiments is available on GitHub: \url{https://github.com/aounon/distributional-robustness}.
\end{abstract}

\section{Introduction}
Machine learning models often suffer significant performance loss under minor shifts in the data distribution that do not affect a human's ability to perform the same task-- e.g., input noise \cite{DodgeK16, GeirhosTRSBW18}, image scaling, shifting and translation \cite{Azulay2019}, spatial \cite{EngstromTTSM19} and geometric transformations \cite{FawziF15, AlcornLGWMKN19}, blurring \cite{blur2016, ZhouSC17}, acoustic corruptions \cite{Pearce2000TheAE} and adversarial perturbations \cite{Szegedy2014, Carlini017, GoodfellowSS14, MadryMSTV18, BiggioCMNSLGR13}.
Overcoming such robustness challenges is a major hurdle for deploying these models in 
safety-critical applications where reliability is paramount.
Several training techniques have been developed to improve the empirical robustness of a model to data shifts, e.g., diversifying datasets \cite{TaoriDSCRS20}, training with natural corruptions \cite{HendrycksD19}, data augmentations \cite{YangWH19}, contrastive learning \cite{KimTH20, RadfordKHRGASAM21, robust_contrastive_learning} and adversarial training \cite{GoodfellowSS14, MadryMSTV18, TramerB19, ShafahiNG0DSDTG19, MainiWK20}.
These methods are designed to withstand specific ways of introducing changes in inputs (e.g., a particular adversarial attack procedure) and may break down under a different method for generating perturbations, e.g., adversarial defenses becoming ineffective under newer attacks \cite{Carlini017, athalye18a, UesatoOKO18, LaidlawF19, Laidlaw_perceptual}.

Certifiable robustness, on the other hand, seeks to produce provable guarantees on the adversarial robustness of a model which hold regardless of the attack strategy.
However, the study of provable robustness has mostly focused on perturbations with a fixed size budget (e.g., an $\ell_p$-ball of same size) for all input points~\cite{cohen19, LecuyerAG0J19, LiCWC19, SalmanLRZZBY19, gowal2018effectiveness, HuangSWDYGDK19, WongK18, Raghunathan2018, Singla2019, singla2020secondorder, Levine-ICML21, Levine2020patch, LevineF20aaai}.
Among provable robustness methods, randomized smoothing based procedures have been able to successfully scale up to high-dimensional problems \cite{cohen19, LecuyerAG0J19, LiCWC19, SalmanLRZZBY19} and adapted effectively to other domains such as reinforcement learning \cite{PolicySmoothing, wu2021crop} and models with structured outputs \cite{kumar2021center}.
Existing techniques do not extend to the aforementioned data shifts as the perturbations for each input point in the distribution need not have a fixed bound on their size.
For example, stochastic changes in the input images of a vision model caused by lighting and weather conditions may vary across time and location.
Even adversarial attacks may adjust the perturbation size depending on the input instance.

Robustness against natural as well as adversarial shifts in the distribution is a major challenge for deploying machine learning algorithms in safety-critical applications such as self-driving, medical diagnosis and critical infrastructure.
Verifiable robustness to both these shifts is an important goal in designing reliable real-world systems.
A standard way to describe such data shifts is to constrain the Wasserstein distance between the original distribution $\mathcal{D}$ and the shifted distribution $\mathcal{\tilde{D}}$, i.e., $W_1^{d}(\mathcal{D}, \mathcal{\tilde{D}}) \leq \epsilon$, for an appropriate distance function $d$ and a real number $\epsilon$.
It essentially bounds the average perturbation size across the entire distribution instead of putting a hard budget constraint for each input point.
Wasserstein distance is a standard similarity measure for probability distributions and has been extensively used to study distribution shifts \cite{CourtyFHR17, DamodaranKFTC18, LeeR18, WuWKL19}.
Certifiable robustness against Wasserstein shifts is an interesting problem to study in its own right and a useful tool to have in the arsenal of provable robustness techniques in machine learning.

In this work, we present an efficient procedure to make any model robust to distributional shifts with verifiable guarantees on performance.  We consider families of parameterized distribution shifts which may include shifts in the RBG color balance of an image, the hue/saturation balance, the brightness/contrast, and more.
To achieve distributional robustness, we randomize the input to our model by replacing each input image with a shifted image randomly sampled from a ``smoothing'' distribution.
By randomizing in this way, a given model can be made provably robust to any shifted distribution (natural or adversarial) within a Wasserstein radius $\epsilon$ from the original distribution.

We design robustness certificates that bound the difference between the accuracy of the robust model under the distributions $\mathcal{D}$ and $\mathcal{\tilde{D}}$ as a function of $\epsilon$.
For a function $\bar{h}$ representing the performance of the robust model on an input-output pair $(x, y)$, our main theoretical result in Theorem~\ref{thm:dist-robust} shows that
\[\left|\mathbb{E}_{(x_1, y_1) \sim \mathcal{D}} [\bar{h}(x_1, y_1)] - \mathbb{E}_{(x_2, y_2) \sim \mathcal{\tilde{D}}} [\bar{h}(x_2, y_2)]\right| \leq \psi(\epsilon),\]
where $\psi$ is a concave function that bounds the total variation between the smoothing distributions at two input points as a function of the distance between them (condition~(\ref{eq:tv_bnd}) in Section~\ref{sec:notations}).
Such an upper bound always exists for any smoothing distribution because the total variation remains between zero and one as the distance between the two input points increases.
We discuss how to find the appropriate $\psi$ for different smoothing distributions in appendix~\ref{sec:psi-functions}.
The above result places a Lipschitz-like bound on the change in performance with respect to the Wasserstein distance of the shifted distribution.

\begin{figure*}[t]
    \centering
    \includegraphics[width=0.95\textwidth]{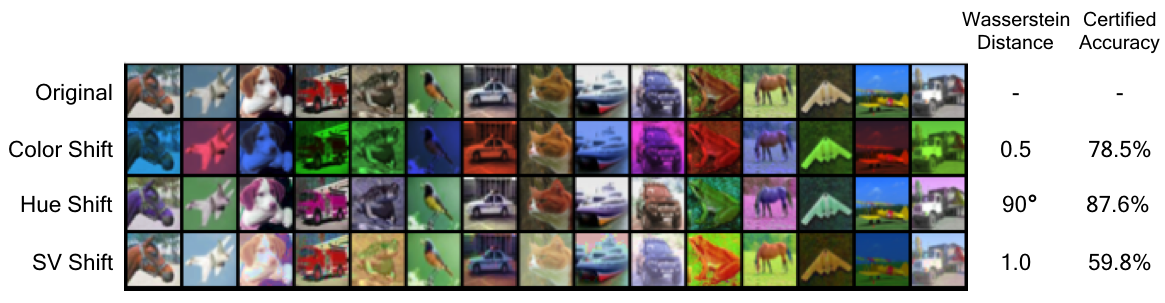}
    \caption{Certified accuracies obtained for different natural transformations of CIFAR-10 images such as color shifts, hue shifts and changes in brightness and saturation. The Wasserstein distance of each distribution shift from the original distribution is defined with respect to the corresponding distance function.}
    \label{fig:transformed_images}
\end{figure*}

Robustness under distribution shifts is a fundamental problem in several areas of machine learning and our certificates could be applicable to a multitude of learning tasks. 
Our method does not make any assumptions on the model or significantly increase its computational requirements as it only needs one sample per input to make robust predictions, making it viable for real-world applications that use conventional neural network architectures.
The sample complexity for generating Wasserstein certificates over the entire distribution is roughly the same as obtaining adversarial certificates for a single input instance using existing randomized smoothing based techniques \cite{cohen19, SalmanLRZZBY19}.
We demonstrate the usefulness of our main theoretical result (Theorem~\ref{thm:dist-robust}) in three domains:

{\bf (i) Certifying model accuracy under natural shifts of the data distribution (Section~\ref{sec:nat-transform}):} We consider three image transformations: color shift, hue shift and changes in brightness and saturation (SV shift).
We define a parameter space for each of these transformations in such a way that the transformations satisfy properties, such as additive composability (equation~(\ref{eq:add_comp})), required for the certificates. 
We show that by randomizing the input of a model in the parameter space of the transformation, we can derive distribution level robustness certificates which guarantee that the accuracy of the model will remain above a threshold for any shifted distribution within an $\epsilon$-sized Wasserstein ball around the original input distribution.
Figure~(\ref{fig:transformed_images}) visualizes CIFAR-10~\cite{krizhevsky2014cifar} images under each of these transformations and reports the corresponding certified accuracies obtained by our method.
Figure~(\ref{fig:undefVScert}) plots the accuracy of two base models (trained on CIFAR-10 images with and without noise in the transformation space) under a shifted distribution and compares it with the certified accuracy of a robust model (noise-trained model smoothed using input randomization).
Note that we commpute the empirical accuracies of the base model (dashed lines) under random, non-adversarial perturbations in the parameter space because naturally occurring image transformations are not adversarial in nature.
An adversarial attack that seeks to find the worst-case case perturbation could further lower the base models' accuracies. 
These results demonstrate that our certificates are significant and non-vacuous (see appendix~\ref{sec:gen-nat-dist} for more details). 

{\bf (ii) Certifying population level robustness against adversarial attacks (Section~\ref{sec:adv-attacks}):}
    The distribution of instances generated by an adversarial attack can also be viewed as a shift in the input distribution within a Wasserstein bound.
    The Wasserstein distance of such a shift is given by the average size of the perturbation ($\ell_2$-norm) added per input instance in the data distribution.
    Since our certificates work for any distribution shift (natural or adversarial) satisfying the Wasserstein bound, we can certify the accuracy of models against adversarial attacks as well.
    Unlike existing certification techniques which assume a fixed perturbation budget across all inputs \cite{cohen19, LecuyerAG0J19, LiCWC19, SalmanLRZZBY19}, our guarantees work for a more general threat model where the adversary is allowed to choose the perturbation size for each input instance as long as it respects the constraint on the average perturbation size over the entire data distribution.
    Also, our procedure only requires {\it one} sample from the smoothing distribution per input instance which makes computing population level certificates significantly more efficient than existing techniques. 
    The certified accuracy we obtain significantly outperforms the base model under attack (figure~\ref{fig:Image_space}).

\begin{figure*}
    \centering
    \includegraphics[width=0.325\textwidth, trim={5mm 2mm 14mm 6mm}, clip]{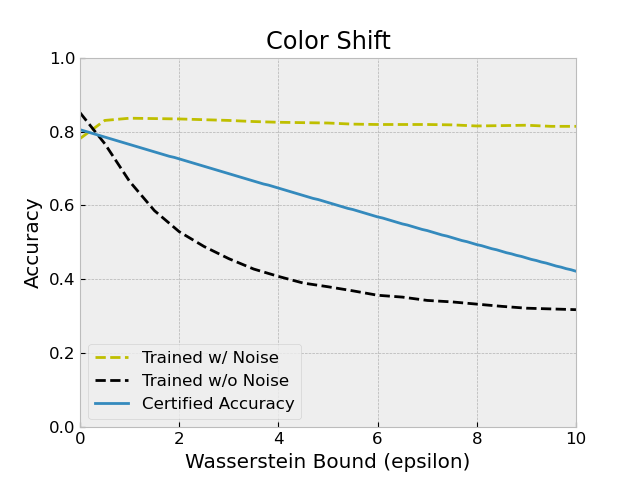}
    \includegraphics[width=0.325\textwidth, trim={5mm 2mm 14mm 6mm}, clip]{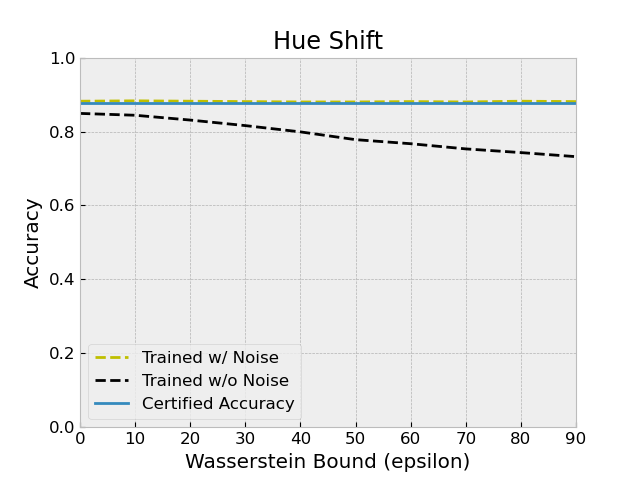}
    \includegraphics[width=0.325\textwidth, trim={5mm 2mm 14mm 6mm}, clip]{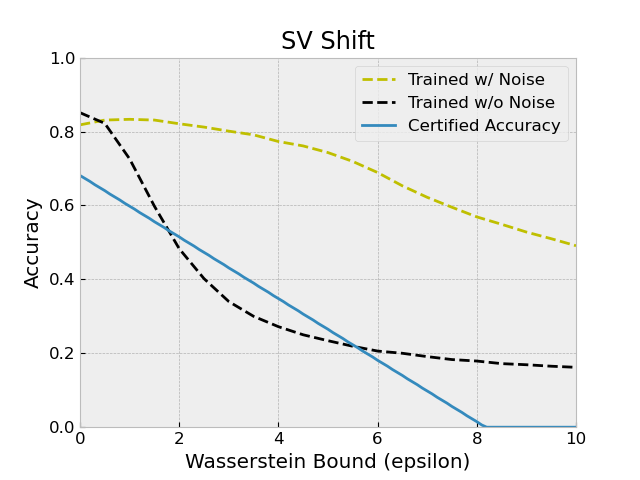}
    \caption{Comparison between the empirical performance (dashed lines) of two base models (trained on CIFAR-10 images with and without noise in transformation space) and the certified accuracy (solid line) of a robust model (noise-trained model smoothed using input randomization) under distribution shifts.
    The certified accuracy 
    often outperforms the undefended model and remains reasonably close (almost overlaps for hue shift) to the model trained under noise for small shifts in the distribution.}
    \label{fig:undefVScert}
\end{figure*}

{\bf (iii) Hardness results for generating “unlearnable” datasets (Section~\ref{sec:unlearnability}):}
    \citep{huang2021unlearnable} proposed a method to make regular datasets unusable for modern deep learning models by poisoning them with adversarial perturbations to interfere with the training of the model.
    The intended purpose is to increase privacy for sensitive data such as personal images uploaded to social media sites.
    The dataset is poisoned in such a way that a model that minimizes the loss on this data distribution will have low accuracy on clean test samples.
    We show that our framework can obtain verifiable lower bounds on the performance of a model trained on such unlearnable datasets. 
    Our certificates guarantee that the performance of the robust model (using input randomization) will remain above a certain threshold on the test distribution even when the base model is trained on the poisoned dataset with a smoothing noise of suitable magnitude. This demonstrates a fundamental limitation in producing unlearnable datasets.

\section{Related Work}
Augmenting training data with corruptions has been shown to improve empirical robustness of machine learning models \cite{HendrycksD19, YangWH19, GoodfellowSS14, MadryMSTV18}.
Training a model with randomly transformed inputs, such as blurring, cropping and rotating images, can improve its test performance against such perturbations.
However, these methods do not seek to produce guarantees on the performance of the model with respect to the extent of the shift in the data distribution.
Our method applies transformations during inference to produce verifiable performance guarantees against any shift in the data distribution within the certified Wasserstein radius.
It does not depend on the underlying model's architecture or the training procedure and may be coupled with robust training techniques to improve the certified guarantees.

Smoothing based approaches that aggregate the predictions on a large number of noisy samples of the input \cite{cohen19, LecuyerAG0J19, LiCWC19, SalmanLRZZBY19} and that use input randomization \cite{Pinot2021} have been studied in the context of certified adversarial robustness.
Certified robustness for parameterized transformations on images also exist \cite{FischerBV20}.
These techniques produce instance-wise certificates and do not generate guarantees on shifts in the data distribution with varying perturbation budget across different points in the distribution.
Our work also differs from instance-wise adversarial attacks and defenses \cite{WongSK19, levine2019wasserstein} that use the Wasserstein distance (instead of conventional $\ell_p$ distances) to measure difference between an image and its perturbed version. In contrast, our certificates consider the Wasserstein distance between data distributions from which images themselves are sampled.

Robustness bounds on the population loss against Wasserstein shifts under the $\ell_2$-distance~\cite{ShenQZY18, SinhaND18} have been derived assuming Lipschitz-continuity of the base model.
These bounds require the Lipschitz constant for the neural network model, which becomes extremely large for deep networks.
Our certificates produce guarantees on the accuracy of an arbitrary neural model without requiring any restrictive assumptions or a global Lipschitz bound.
Additionally, our approach can certify robustness under non-$\ell_p$ changes to the input images, such as visible color shifts, for which the $\ell_2$-norm of the perturbation in the image space will be very large.
Another line of work seeks to prove generalization bounds with respect to divergence-based measures of distribution shift~\cite{Ben-DavidBCP06, Zhao2019, MehraKCH21}.
Divergence measures become arbitrarily large (e.g. KL-divergence becomes infinity) or attain their maximal value (e.g. total variation becomes one) when the supports of probability distributions do not coincide.
Thus, these measures are not suitable for measuring out-of-distribution data shifts which by definition have non-overlapping support.
Wasserstein distance, on the other hand, uses the spatial separation between two distributions to meaningfully measure their distance even when their supports are disjoint.

\section{Preliminaries and Notations}
\label{sec:notations}
Let $\mathcal{D}$ be the data distribution representing a machine learning task over an input space $\mathcal{X}$ and an output space $\mathcal{Y}$.
We define a distribution shift as a covariate shift that only changes the distribution of the input element in samples $(x, y) \in \mathcal{X} \times \mathcal{Y}$ drawn from $\mathcal{D}$ and leaves the output element unchanged, i.e., $(x, y)$ changes to $(\tilde{x}, y)$ under the shift.
Given a distance function $d_{\mathcal{X}} : \mathcal{X} \times \mathcal{X} \rightarrow \mathbb{R}_{\geq 0}$ over the input space, we define the following distance function between two tuples $\tau_1 = (x_1, y_1)$ and $\tau_2 = (x_2, y_2)$ to capture the above shift:%
\begin{equation}
\label{def:shift-dist}
    d (\tau_1, \tau_2) =
    \begin{cases}
      d_{\mathcal{X}}(x_1, x_2) & \text{if } y_1 = y_2\\
      \infty        & \text{otherwise.}
    \end{cases}
\end{equation}
Let $\mathcal{\tilde{D}}$ denote a shift in the original data distribution $\mathcal{D}$ such that the Wasserstein distance under $d$ between $\mathcal{D}$ and $\mathcal{\tilde{D}}$ is bounded by $\epsilon$ (i.e., $W_1^{d}(\mathcal{D}, \mathcal{\tilde{D}}) \leq \epsilon$).
Define the set of all joint probability distributions with marginals $\mu_{\mathcal{D}}$ and $\mu_{\tilde{\mathcal{D}}}$ as follows:%
\begin{align*}
\Gamma(\mathcal{D}, \mathcal{\tilde{D}}) = \bigg\{ \gamma \; \text{ s.t. } \int_{\mathcal{X} \times \mathcal{Y}} \gamma(\tau_1, \tau_2) d\tau_2 = \mu_{\mathcal{D}}(\tau_1)
    \text{ and } \int_{\mathcal{X} \times \mathcal{Y}} \gamma(\tau_1, \tau_2) d\tau_1 = \mu_{\mathcal{\tilde{D}}}(\tau_2) \bigg\}.
\end{align*}
The Wasserstein bound implies that there exists an element $\gamma^* \in \Gamma(\mathcal{D}, \mathcal{\tilde{D}})$ such that%
\begin{equation}
\label{eq:wasserstein_bnd}
\mathbb{E}_{(\tau_1, \tau_2) \sim \gamma^*}[d(\tau_1, \tau_2)] \leq \epsilon.
\end{equation}

Let $\mathcal{S}: \mathcal{X} \rightarrow \Delta(\mathcal{X})$ be a function mapping each element $x \in \mathcal{X}$ to a smoothing distribution $\mathcal{S}(x)$, where $\Delta(\mathcal{X})$ is the set of all probability distributions over $\mathcal{X}$.
For example, smoothing with an isometric Gaussian noise distribution with variance $\sigma^2$ can be denoted as $\mathcal{S}(x) = \mathcal{N}(x, \sigma^2 I)$.
Let the total variation between the smoothing distributions at two points $x_1$ and $x_2$ be bounded by a concave increasing function $\psi$ of the distance between them, i.e.,
\begin{equation}
\label{eq:tv_bnd}
    \TV (\mathcal{S}(x_1), \mathcal{S}(x_2)) \leq \psi(d_{\mathcal{X}}(x_1, x_2)).
\end{equation}
For example, when the distance function $d$ is the $\ell_2$-norm of the difference of $x_1$ and $x_2$, and the smoothing distribution is an isometric Gaussian $\mathcal{N}(0, \sigma^2 I)$ with variance $\sigma^2$, $\psi(\cdot) = \erf(\cdot/2\sqrt{2}\sigma)$ is a valid upper bound on the above total variation that is concave in the positive domain (see appendix~\ref{sec:psi-functions} for more examples).

Consider a function $h: \mathcal{X} \times \mathcal{Y} \rightarrow [0, 1]$ that represents the accuracy of a model.
For example, in the case of a classifier $\mu: \mathcal{X} \rightarrow \mathcal{Y}$ that maps inputs from space $\mathcal{X}$ to a class label in $\mathcal{Y}$, $h(x, y) \coloneqq \mathbf{1}\{\mu(x) = y\}$ could indicate whether the prediction of $\mu$ on $x$ matches the desired output label $y$ or not.
Another example could be that of segmentation/detection tasks, where $y$ represents a region on an input image $x$.
Then, $h(x, y) \coloneqq \text{IoU}(\mu(x), y)$\footnote{IoU stands for Intersection over Union.} could represent the overlap between the predicted regions $\mu(x)$ and the ground truth $y$.
The overall accuracy of the model $\mu$ under $\mathcal{D}$ is then given by $\mathbb{E}_{(x, y) \in \mathcal{D}}[h(x, y)]$.
We define a robust model $\bar{\mu}(x) = \mu(x')$ where $x \sim \mathcal{S}(x)$.
Our goal is to bound the difference in the expected performance of the robust model between the original distribution $\mathcal{D}$ and the shifted distribution $\mathcal{\tilde{D}}$.
We define a {\it smoothed} version of the function $h$ to represent the performance of $\bar{\mu}$ on a given input-output pair $(x, y)$:
\begin{equation}
\label{eq:smooth-fn}
\bar{h}(x, y) = \mathbb{E}_{x' \sim \mathcal{S}(x)} [h(x', y)].
\end{equation}
Our main theoretical result (Theorem~\ref{thm:dist-robust}) bounds the difference between the expected value of $\bar{h}$ under $\mathcal{D}$ and $\mathcal{\tilde{D}}$ using the function $\psi$ and the Wasserstein bound $\epsilon$.

\subsection{Parameterized Transformations}
\label{sec:param_transform}
We apply our distributional robustness certificates to certify the accuracy of an image classifier under natural transformations of the images such as color shifts, hue shifts and changes in brightness and saturation. 
We model each of them as a parameterized transformation $\mathcal{T}: \mathcal{X} \times P \rightarrow \mathcal{X}$ which is a function over the input/image space $\mathcal{X}$ and a parameter space $P$ which takes an image $x \in \mathcal{X}$ and a parameter vector $\theta \in P$ and outputs a transformed image $x' = \mathcal{T}(x, \theta) \in \mathcal{X}$.
An example of such a transformation could be a color shift in an RGB image produced by scaling the intensities in the red, green and blue channels $x = (\{x_{ij}^R\}, \{x_{ij}^G\}, \{x_{ij}^B\})$ defined as $\mathsf{CS}(x, \theta) = (2^{\theta_R} \{x_{ij}^R\}, 2^{\theta_G}\{x_{ij}^G\}, 2^{\theta_B}\{x_{ij}^B\})/\mathsf{MAX}$ for a tuple $\theta = (\theta_R, \theta_G, \theta_B)$, where $\mathsf{MAX}$ is the maximum of all the RGB values after scaling.
Another example could be vector translations $\mathsf{VT}(x, \theta) = x + \theta$.
In order to apply randomized smoothing, we assume that the transformation returns $x$ if the parameters are all zero, i.e., $\mathcal{T}(x, 0) = x$ and that the composition of two transformations with parameters $\theta_1$ and $\theta_2$ is a transformation with parameters $\theta_1 + \theta_2$ (additive composability), i.e.,
\begin{equation}
\label{eq:add_comp}
\mathcal{T}(\mathcal{T}(x, \theta_1), \theta_2) = \mathcal{T}(x, \theta_1 + \theta_2).
\end{equation}

Given a norm $\|\cdot\|$ in the parameter space $P$, we define a distance function in the input space $\mathcal{X}$ as follows:
\begin{equation}
\label{def:trans-dist}
    d_{\mathcal{T}} (x_1, x_2) =
    \begin{cases}
      \min \{\|\theta\| \mid \mathcal{T}(x_1, \theta) = x_2\} & \text{if } \exists \theta \text{ s.t. } \mathcal{T}(x_1, \theta) = x_2\\
      \infty        & \text{otherwise.}
    \end{cases}
\end{equation}

Now, define a smoothing distribution $\mathcal{S}(x) = \mathcal{T}(x, \mathcal{Q}(0))$ for some distribution $\mathcal{Q}$ in the parameter space of $\mathcal{T}$ such that $\forall \theta \in P, \mathcal{Q}(\theta) = \theta + \mathcal{Q}(0)$ is the distribution of $\theta + \delta$ where $\delta \sim \mathcal{Q}(0)$, and $\TV (\mathcal{Q}(0), \mathcal{Q}(\theta)) \leq \psi(\|\theta\|)$ for a concave function $\psi$.
For example, $\mathcal{Q}(\cdot) = \mathcal{N}(\cdot, \sigma^2 I)$ satisfies these properties for $\psi(\cdot) = \erf(\cdot/2\sqrt{2}\sigma)$.
Then, the following lemma holds (proof in appendix~\ref{sec:proof_lem_tv_bnd_transform_dist}):
\begin{lemma}
\label{lem:tv_bnd_transform_dist}
For two points $x_1, x_2 \in \mathcal{X}$ such that $d_{\mathcal{T}}(x_1, x_2)$ is finite, 
\[\TV (\mathcal{S}(x_1), \mathcal{S}(x_2)) \leq \psi(d_{\mathcal{T}}(x_1, x_2)). \]
\end{lemma}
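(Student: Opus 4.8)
The plan is to reduce the total-variation bound between the two smoothing distributions $\mathcal{S}(x_1)$ and $\mathcal{S}(x_2)$ to the total-variation bound between the parameter-space distributions $\mathcal{Q}(0)$ and $\mathcal{Q}(\theta^*)$, where $\theta^*$ is a minimal-norm parameter vector witnessing $d_{\mathcal{T}}(x_1,x_2)$. Concretely, since $d_{\mathcal{T}}(x_1,x_2)$ is finite, there exists $\theta^* \in P$ with $\mathcal{T}(x_1, \theta^*) = x_2$ and $\|\theta^*\| = d_{\mathcal{T}}(x_1,x_2)$ (assuming the minimum in (\ref{def:trans-dist}) is attained; if only an infimum, the same argument runs with $\theta$ of norm arbitrarily close to $d_{\mathcal{T}}(x_1,x_2)$ and we take a limit using continuity of $\psi$). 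The first step is to rewrite $\mathcal{S}(x_2)$ in terms of a transformation applied to $x_1$: using additive composability (\ref{eq:add_comp}),
\[
\mathcal{S}(x_2) = \mathcal{T}(x_2, \mathcal{Q}(0)) = \mathcal{T}(\mathcal{T}(x_1, \theta^*), \mathcal{Q}(0)) = \mathcal{T}(x_1, \theta^* + \mathcal{Q}(0)) = \mathcal{T}(x_1, \mathcal{Q}(\theta^*)),
\]
where the last equality uses the defining property $\mathcal{Q}(\theta^*) = \theta^* + \mathcal{Q}(0)$. Meanwhile $\mathcal{S}(x_1) = \mathcal{T}(x_1, \mathcal{Q}(0))$. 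So both smoothing distributions are pushforwards, under the \emph{same} map $\theta \mapsto \mathcal{T}(x_1, \theta)$, of the two parameter distributions $\mathcal{Q}(0)$ and $\mathcal{Q}(\theta^*)$.

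The second step is to invoke the data-processing inequality for total variation: for any (measurable) map $f$ and distributions $\mu,\nu$ on its domain, $\TV(f_\# \mu, f_\# \nu) \leq \TV(\mu, \nu)$. Applying this with $f = \mathcal{T}(x_1, \cdot)$, $\mu = \mathcal{Q}(0)$, $\nu = \mathcal{Q}(\theta^*)$ gives
\[
\TV(\mathcal{S}(x_1), \mathcal{S}(x_2)) = \TV\big(\mathcal{T}(x_1,\cdot)_\# \mathcal{Q}(0),\ \mathcal{T}(x_1,\cdot)_\# \mathcal{Q}(\theta^*)\big) \leq \TV(\mathcal{Q}(0), \mathcal{Q}(\theta^*)).
\]
The third step is then immediate: by the assumed bound on $\mathcal{Q}$, $\TV(\mathcal{Q}(0), \mathcal{Q}(\theta^*)) \leq \psi(\|\theta^*\|) = \psi(d_{\mathcal{T}}(x_1,x_2))$, which is exactly the claim.

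I expect the main obstacle to be a measure-theoretic subtlety rather than a conceptual one: justifying the pushforward computation and the data-processing inequality rigorously requires that $\mathcal{T}(x_1, \cdot)$ be measurable and that the pushforward identities above hold as genuine equalities of measures (not just heuristically). If the transformation map fails to be injective, the inequality is still correct — data processing only needs measurability — so there is no real loss, but it is worth stating the measurability assumption explicitly. A secondary point is the attainment of the minimum in the definition (\ref{def:trans-dist}) of $d_{\mathcal{T}}$; if one prefers not to assume the $\min$ is achieved, the cleanest fix is to pick $\theta_n$ with $\|\theta_n\| \to d_{\mathcal{T}}(x_1,x_2)$ and $\mathcal{T}(x_1,\theta_n) = x_2$ for all $n$, apply the above to each $\theta_n$, and pass to the limit using that $\psi$ is continuous (being concave and increasing on $\mathbb{R}_{\geq 0}$). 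Everything else is routine.
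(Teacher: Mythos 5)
Your argument is essentially the same as the paper's: pick a minimal-norm $\theta$ with $\mathcal{T}(x_1,\theta)=x_2$, use additive composability to write both smoothing distributions as pushforwards of $\mathcal{Q}(0)$ and $\mathcal{Q}(\theta)$ under the same map $\mathcal{T}(x_1,\cdot)$, and then bound the total variation of the pushforwards by that of $\mathcal{Q}(0)$ and $\mathcal{Q}(\theta)$. The only cosmetic difference is that you invoke the data-processing inequality by name (and flag the measurability and $\min$-attainment caveats), whereas the paper proves the inequality inline by considering the event $A$ that attains the TV and the bounded function $u(\eta)=\mathbb{P}\{\mathcal{T}(x_1,\eta)\in A\}$, which also quietly covers the case of a randomized transformation map.
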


\section{Certified Distributional Robustness}
In this section, we state our main theoretical result showing that the difference in the expectation of the smoothed performance function $\bar{h}$ defined in equation~(\ref{eq:smooth-fn}) under the original distribution $\mathcal{D}$ and the shifted distribution $\mathcal{\tilde{D}}$ can be bounded by the Wasserstein distance between $\mathcal{D}$ and $\mathcal{\tilde{D}}$.
Given a concave upper bound $\psi$ on the total variation between the smoothing distributions at two points $x_1$ and $x_2$ (condition~(\ref{eq:tv_bnd})), the following theorem holds.

\begin{theorem}
\label{thm:dist-robust}
Given a function $h: \mathcal{X} \times \mathcal{Y} \rightarrow [0, 1]$, define its smoothed version as $\bar{h}(x, y) = \mathbb{E}_{x' \sim \mathcal{S}(x)} [h(x', y)]$.
Then,
\[\left|\mathbb{E}_{(x_1, y_1) \sim \mathcal{D}} [\bar{h}(x_1, y_1)] - \mathbb{E}_{(x_2, y_2) \sim \mathcal{\tilde{D}}} [\bar{h}(x_2, y_2)]\right| \leq \psi(\epsilon).\]
\end{theorem}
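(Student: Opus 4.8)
The plan is to use the optimal coupling $\gamma^*$ guaranteed by the Wasserstein bound to transport the expectation under $\mathcal{D}$ to the expectation under $\mathcal{\tilde D}$, and control the error pointwise via the total variation bound~(\ref{eq:tv_bnd}). Concretely, I would first write the left-hand side as a single expectation over the coupling:
\[
\mathbb{E}_{(x_1,y_1)\sim\mathcal{D}}[\bar h(x_1,y_1)] - \mathbb{E}_{(x_2,y_2)\sim\mathcal{\tilde D}}[\bar h(x_2,y_2)]
= \mathbb{E}_{(\tau_1,\tau_2)\sim\gamma^*}\big[\bar h(\tau_1) - \bar h(\tau_2)\big],
\]
using that $\gamma^*$ has the correct marginals. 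Since $d(\tau_1,\tau_2)<\infty$ almost surely under $\gamma^*$ (otherwise the bound~(\ref{eq:wasserstein_bnd}) fails), we have $y_1=y_2$ $\gamma^*$-a.s., so each summand is $\bar h(x_1,y) - \bar h(x_2,y)$ for a common label $y$.

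Next I would bound $|\bar h(x_1,y)-\bar h(x_2,y)|$ pointwise. Because $h$ takes values in $[0,1]$ and $\bar h(x,y)=\mathbb{E}_{x'\sim\mathcal{S}(x)}[h(x',y)]$ is an expectation of $h(\cdot,y)$ under $\mathcal{S}(x)$, the difference of the two expectations is at most the total variation distance between the smoothing distributions: $|\bar h(x_1,y)-\bar h(x_2,y)| \le \TV(\mathcal{S}(x_1),\mathcal{S}(x_2))$. Applying the hypothesis~(\ref{eq:tv_bnd}) gives $|\bar h(x_1,y)-\bar h(x_2,y)| \le \psi(d_{\mathcal{X}}(x_1,x_2)) = \psi(d(\tau_1,\tau_2))$ on the (full-measure) event $y_1=y_2$.

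Combining, I would take absolute values through Jensen, then apply the pointwise bound and then Jensen again for the concave $\psi$:
\[
\Big|\mathbb{E}_{\gamma^*}[\bar h(\tau_1)-\bar h(\tau_2)]\Big|
\le \mathbb{E}_{\gamma^*}\big[|\bar h(\tau_1)-\bar h(\tau_2)|\big]
\le \mathbb{E}_{\gamma^*}\big[\psi(d(\tau_1,\tau_2))\big]
\le \psi\big(\mathbb{E}_{\gamma^*}[d(\tau_1,\tau_2)]\big)
\le \psi(\epsilon),
\]
where the last step uses $\mathbb{E}_{\gamma^*}[d(\tau_1,\tau_2)]\le\epsilon$ and that $\psi$ is increasing. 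I expect the main (mild) obstacle to be handling the $\infty$ values of $d$ cleanly — i.e., arguing that $\gamma^*$ puts zero mass on pairs with $y_1\neq y_2$, so that $\psi(d(\tau_1,\tau_2))$ is well-defined a.s. and the concavity/monotonicity application to $\mathbb{E}_{\gamma^*}[d]$ is legitimate; everything else is a short chain of Jensen's inequality and the definition of total variation.
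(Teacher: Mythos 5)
Your proposal matches the paper's proof essentially step for step: transport both expectations onto the optimal coupling $\gamma^*$, observe that $y_1=y_2$ holds $\gamma^*$-a.s. because otherwise $d(\tau_1,\tau_2)=\infty$ would violate the Wasserstein bound (the paper isolates this as a short lemma proved by contradiction via the law of total expectation), bound $|\bar h(x_1,y)-\bar h(x_2,y)|$ by $\TV(\mathcal{S}(x_1),\mathcal{S}(x_2))$ since $h\in[0,1]$, and finish with the hypothesis~(\ref{eq:tv_bnd}) plus Jensen's inequality and monotonicity of $\psi$. The one place you gesture at a standard fact that the paper spells out is the inequality $|\mathbb{E}_{\mu_1}[g]-\mathbb{E}_{\mu_2}[g]|\le\TV(\mu_1,\mu_2)$ for $g\in[0,1]$, but that is routine, so the argument is complete and identical in substance.
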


We defer the proof to the appendix.
The intuition behind the above guarantee is that if the overlap between the smoothing distributions between two individual points does not decrease rapidly with the distance between them, then the overlap between $\mathcal{D}$ and $\mathcal{\tilde{D}}$ augmented with the smoothing distribution remains high when the Wasserstein distance between them is small.
Thus, for a Wasserstein shift of $\epsilon$, the accuracy of the model can be bounded as $\mathbb{E}_{(x_2, y_2) \sim \mathcal{\tilde{D}}} [\bar{h}(x_2, y_2)] \geq \mathbb{E}_{(x_1, y_1) \sim \mathcal{D}} [\bar{h}(x_1, y_1)] - \psi(\epsilon)$.
But, in practice, we may only estimate $\mathbb{E}_{(x_1, y_1) \sim \mathcal{D}} [\bar{h}(x_1, y_1)]$ using a finite number of samples.
In our experiments, we compute a confidence lower bound using the Clopper-Pearson method that holds with $1-\alpha$ probability (for some $\alpha > 0$, usually 0.001) on the robust model's accuracy on the original distribution \cite{clopper_conf_int}.
In the following sections, we will demonstrate the usefulness of this fundamental result in several applications.

\section{Certified Accuracy against Natural Transformations}
\label{sec:nat-transform}
We certify the accuracy of a ResNet-110 model trained on CIFAR-10 images under three types of image transformations: color shifts, hue shifts and variation in brightness and saturation (SV shift).
We train our models with varying levels of noise in the transformation space and evaluate their certified performance using smoothing distributions of different noise levels.
For color and SV shifts, we show how the certified accuracy varies as a function of the Wasserstein distance as we change the training and smoothing noise.
For hue shift, we use a smoothing distribution (with fixed noise level) that is invariant to rotations in hue space and the certified accuracy remains constant with respect to the corresponding Wasserstein distance.
Training each model for 90 epochs takes a few hours on a single NVIDIA GeForce RTX 2080 Ti GPU and computing the distribution level Wasserstein certificates using $10^5$ samples with 99.9\% confidence takes about 25 seconds. 

\begin{figure*}
    \centering
    \hspace{-4mm}
    \includegraphics[width=0.245\textwidth, trim={5mm 2mm 16mm 6mm}, clip]{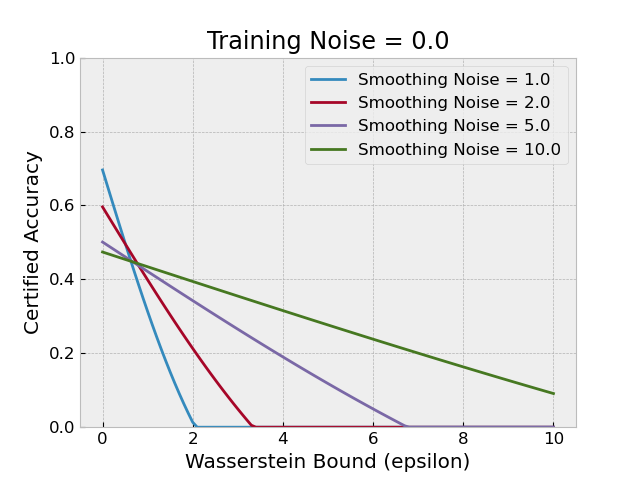}
    \includegraphics[width=0.245\textwidth, trim={5mm 2mm 16mm 6mm}, clip]{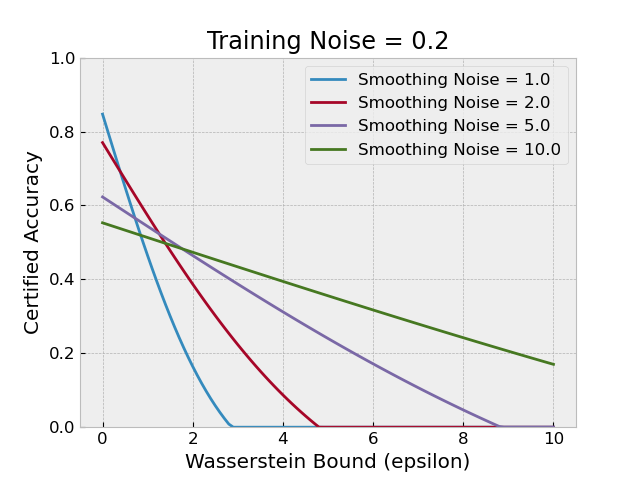}
    \includegraphics[width=0.245\textwidth, trim={5mm 2mm 16mm 6mm}, clip]{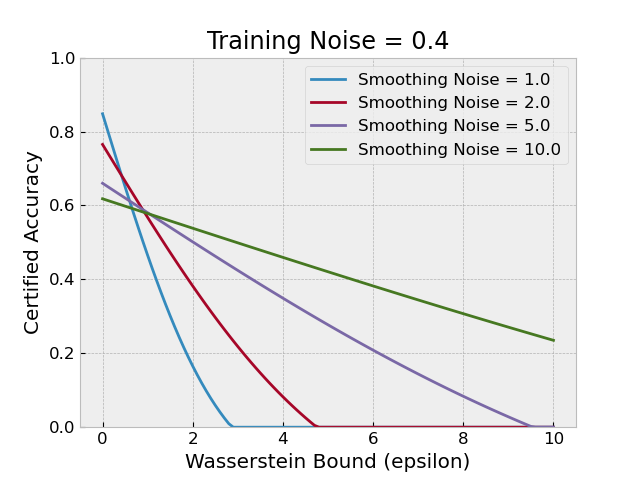}
    \includegraphics[width=0.245\textwidth, trim={5mm 2mm 16mm 6mm}, clip]{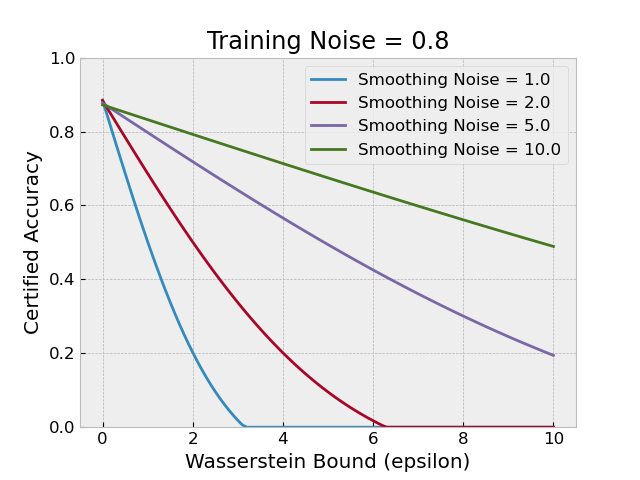}
    \caption{Certified accuracy under color shifts. Each plot corresponds to a particular training noise and each curve corresponds to a particular smoothing noise.}
    \label{fig:cert-acc-CS}
\end{figure*}

\subsection{Color Shifts}
\label{sec:color-shift}

Denote an RGB image $x$ as an $H \times W$ array of pixels where the red, green and blue components of the pixel in the $i$th row and $j$th column are given by the tuple $x_{ij} = (r, g, b)_{ij}$.
Let $r_{\max}, g_{\max}$ and $b_{\max}$ be the maximum values of the red, green and blue channels, respectively.
Assume that the RGB values are in the interval $[0, 1]$ normalized such that the maximum over all intensity values is one, i.e., $\max(r_{\max}, g_{\max}, b_{\max}) = 1$.
Define a color shift of the image $x$ for a parameter vector $\theta \in \mathbb{R}^3$ as
\[\mathsf{CS}(x, \theta) = \left\{ \frac{(2^{\theta_R} r, 2^{\theta_G} g, 2^{\theta_B} b)_{ij} }{\max (2^{\theta_R} r_{\max}, 2^{\theta_G} g_{\max}, 2^{\theta_B} b_{\max})} \right\}^{H \times W}\]
which scales the intensities of each channel by the corresponding component of $\theta$ raised to the power of two and then normalizes the scaled image so that the maximum intensity is one.
For example, $\theta = (1, -1, 0)$ would first double all the red intensities, halve the green intensities and leave the blue intensities unchanged, and then, normalize the image so that the maximum intensity value over all the channels is equal to one.
The above transformation can be shown to satisfy the additive composability property in condition~(\ref{eq:add_comp}). See appendix~\ref{sec:add-comp-proof} for a proof.


Given an image $x$, we define a smoothing distribution around $x$ in the parameter space as $\mathsf{CS}(x, \delta)$ where $\delta \sim \mathcal{N}(0, \sigma^2 I_{3 \times 3})$.
Define the distance function $d_{\mathsf{CS}}$ as described in~(\ref{def:trans-dist}) using the $\ell_2$-norm in the parameter space.
For a distribution $\mathcal{\tilde{D}}$ within a Wasserstein distance of $\epsilon$ 
from the original distribution $\mathcal{D}$, the performance of the smoothed model on $\mathcal{\tilde{D}}$ can be bounded as $\mathbb{E}_{(x_2, y_2) \sim \mathcal{\tilde{D}}} [\bar{h}(x_2, y_2)] \geq \mathbb{E}_{(x_1, y_1) \sim \mathcal{D}} [\bar{h}(x_1, y_1)] - \erf (\epsilon/2 \sqrt{2} \sigma)$.
Figure~\ref{fig:cert-acc-CS} plots the certified accuracy under color shift with respect to the Wasserstein bound $\epsilon$ for different values of training and smoothing noise.
In appendix~\ref{sec:random-channel}, we consider a smoothing distribution that randomly picks one color channel achieving a constant certified accuracy of 87.1\% with respect to the Wasserstein bound $\epsilon$.

\subsection{Hue Shift}
Any RGB image can be alternatively represented in the HSV image format by mapping the $(r, g, b)$ tuple for each pixel to a point $(h, s, v)$ in a cylindrical coordinate system where the values $h, s$ and $v$ represent the hue, saturation and brightness (value) of the pixel.
The mapping from the RGB coordinate to the HSV coordinate takes the $[0, 1]^3$ color cube and transforms it into a cylinder of unit radius and height.
The hue values are represented as angles in $[0, 2\pi)$ and the saturation and brightness values are in $[0, 1]$.
Define a hue shift of an $H \times W$ sized image $x$ by an angle $\theta \in [-\pi, \pi]$
in the HSV space
that rotates each hue value by an angle $\theta$ and wraps it around to the $[0, 2\pi)$ range.
In appendix~\ref{sec:apx-hue-plots}, we show that the certified accuracy under hue shifts does not depend on the Wasserstein distance of the shifted distribution and report the certified accuracies obtained by various base models trained under different noise levels

\begin{figure*}
    \centering
    \hspace{-4mm}
    \includegraphics[width=0.245\textwidth, trim={5mm 2mm 16mm 6mm}, clip]{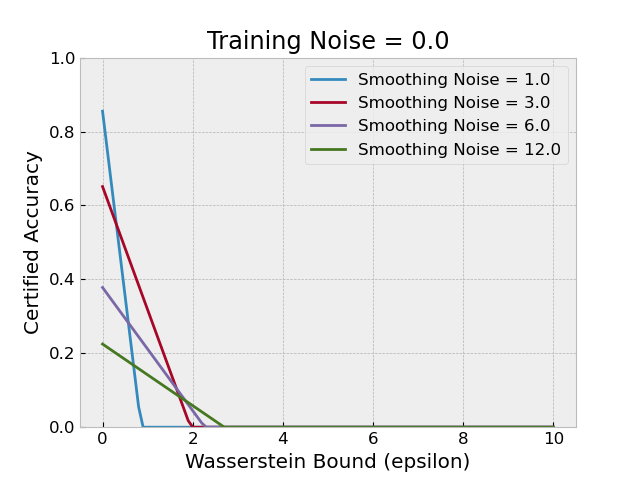}
    \includegraphics[width=0.245\textwidth, trim={5mm 2mm 16mm 6mm}, clip]{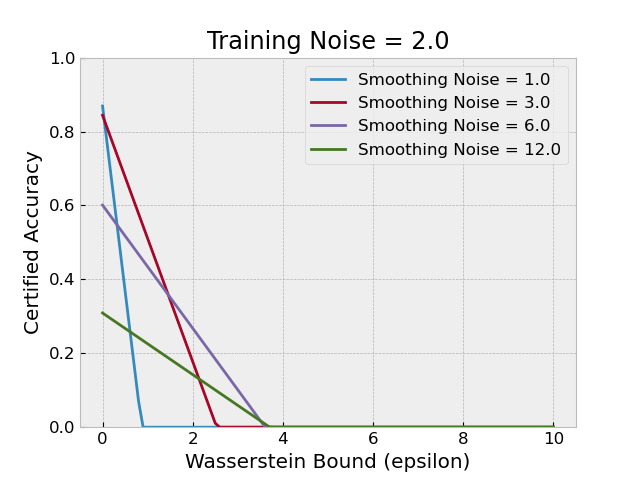}
    \includegraphics[width=0.245\textwidth, trim={5mm 2mm 16mm 6mm}, clip]{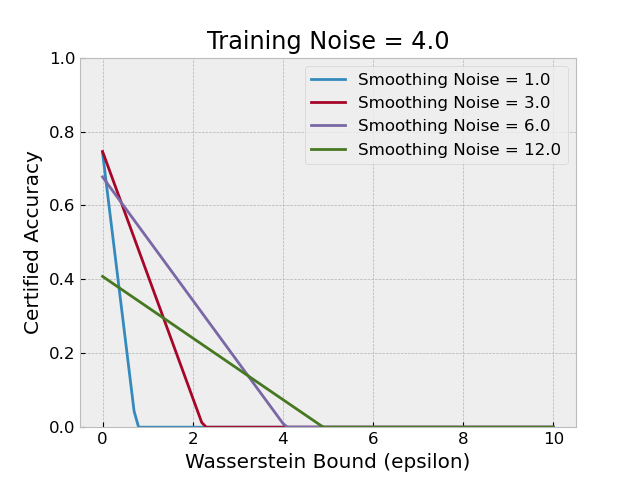}
    \includegraphics[width=0.245\textwidth, trim={5mm 2mm 16mm 6mm}, clip]{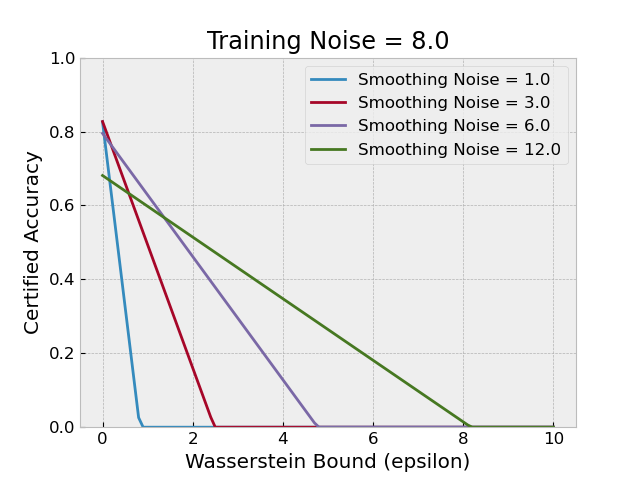}
    \caption{Certified accuracy under brightness and saturation changes. Each plot corresponds to a particular training noise and each curve corresponds to a particular smoothing noise.}
    \label{fig:cert-acc-SV}
\end{figure*}

\subsection{Brightness and Saturation Changes}
\label{sec:brightness-saturation}
Define the following transformation in the HSV space of an image that shifts the mean of the saturation (S) and brightness (V) values for each pixel by a certain amount:
\begin{align*}
\mathsf{SV}(x, \theta) = \Bigg\{ \bigg( h, &\frac{s + (2^{\theta_S} - 1) s_{\mean}}{\mathsf{MAX}},
\frac{v + (2^{\theta_V} - 1) v_{\mean}}{\mathsf{MAX}} \bigg)_{ij} \Bigg\}^{H \times W}
\end{align*}
where $s_{\mean}, s_{\max}, v_{\mean}$ and $v_{\max}$ are the means and maximums of the saturation and brightness values respectively before the shift is applied and $\mathsf{MAX} = \max(s_{\max} + (2^{\theta_S} - 1) s_{\mean}, v_{\max} + (2^{\theta_V} - 1) v_{\mean})$ is the maximum of the brightness and saturation values after the shift.
Similar to color shift , the $\mathsf{SV}$ transformation can also be shown to satisfy additive composability (appendix~\ref{sec:add-comp-proof}).

Figure~\ref{fig:cert-acc-SV} plots the certified accuracy under saturation and brightness changes with respect to the Wasserstein bound $\epsilon$ for different values of training and smoothing noise.
The smoothing distribution for this transformation is a uniform in the range $[0, a]^2$ in the parameter space, the distance function is the $\ell_1$-norm and $\psi (\epsilon) = \min(\epsilon / a, 1)$.

\section{Population-Level Certificates against Adversarial Attacks}
\label{sec:adv-attacks}
In this section, we consider the $\ell_2$-distance in the image space to measure the Wasserstein distance instead of a parameterized transformation.
We use a pixel-space Gaussian smoothing distribution $\mathcal{S}(x) = \mathcal{N}(x, \sigma^2 I)$ to obtain robustness guarantees under this metric.
To motivate this, consider an adversarial attacker $\text{Adv}: \mathcal{X} \to \mathcal{X} $, which takes an image $x$ and computes perturbation $\text{Adv}(x)$ to try and fool a model into misclassifying the input.
If $(x, y) \sim \mathcal{D}$, define $\mathcal{\tilde{D}}$ to be the distribution of the tuples $(\text{Adv}(x), y)$.
Defining $d$ in~\ref{def:shift-dist} using $d_{\mathcal{X}} = \ell_2$, it is easy to show that:
\begin{equation} \label{eq:wass_exp}
    W_1^{d}(\mathcal{D}, \mathcal{\tilde{D}})  \leq \mathbb{E}_{x \sim \mathcal{D}}  [\|\text{Adv}(x) -  x\|_2 ]
\end{equation}
So, if the \textit{average} magnitude of perturbations induced by $\text{Adv}$ is less than $\epsilon$ (i.e., $ [\|\text{Adv}(x) -  x\|_2 ] <  \epsilon$), then $W_1^{d}(\mathcal{D}, \mathcal{\tilde{D}})  < \epsilon$  
which means that we can apply Theorem \ref{thm:dist-robust}: the gap in the expected accuracy between $x \sim \mathcal{D}$ and $\text{Adv}(x)  \sim \mathcal{\tilde{D}} $ will be at most $\psi(\epsilon)$.
Note that, under this threat model, $\text{Adv}$ can be strategic in its use of the average perturbation ``budget'': if a certain point $x$ would require a very large perturbation to be misclassified, or is already misclassified, then $\text{Adv}(x)$ can save the budget by simply returning $x$ and use it to attack a greater number of more vulnerable samples.  

Note that our method differs from \textit{sample-wise} certificates against $\ell_2$ adversarial attacks which use randomized smoothing, such as \cite{cohen19}. Specifically, we use only one smoothing perturbation (and therefore only one forward pass) per sample. Our guarantees are on the overall accuracy of the classifier, not on the stability of any particular prediction. Finally, as discussed, our threat model is different, because we allow the adversary to strategically choose which samples to attack, with the certificate dependent on the \textit{Wasserstein} magnitude of the \textit{distributional} attack.

\begin{figure}[t]
\centering
\begin{subfigure}{.49\textwidth}
    \vspace{-2mm}
    \centering
    \includegraphics[width=\linewidth]{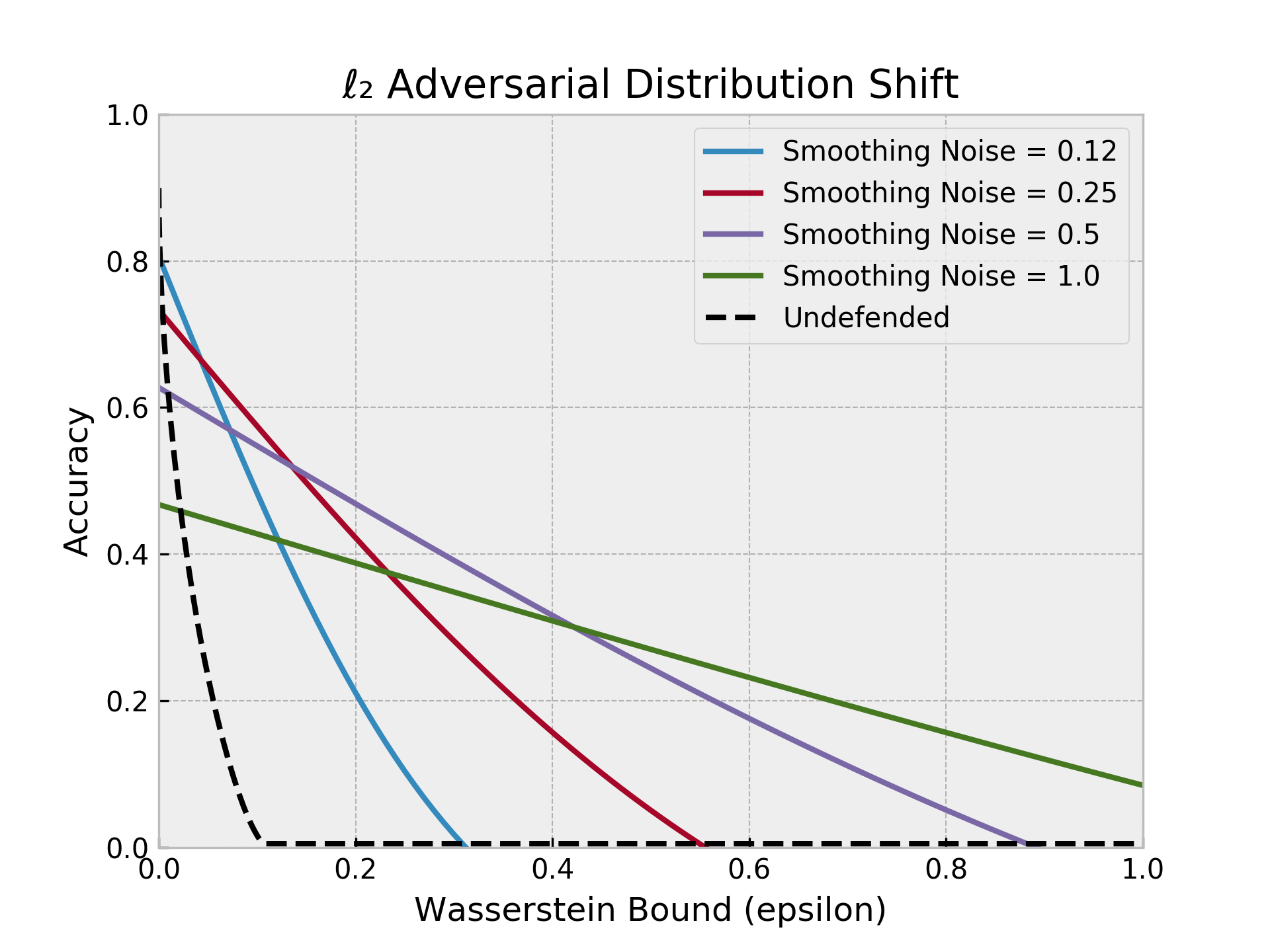}
    \caption{$\ell_2$ adversarial attacks.}
    \label{fig:Image_space}
\end{subfigure}
\begin{subfigure}{.49\textwidth}
  \centering
    \includegraphics[width=0.96\linewidth]{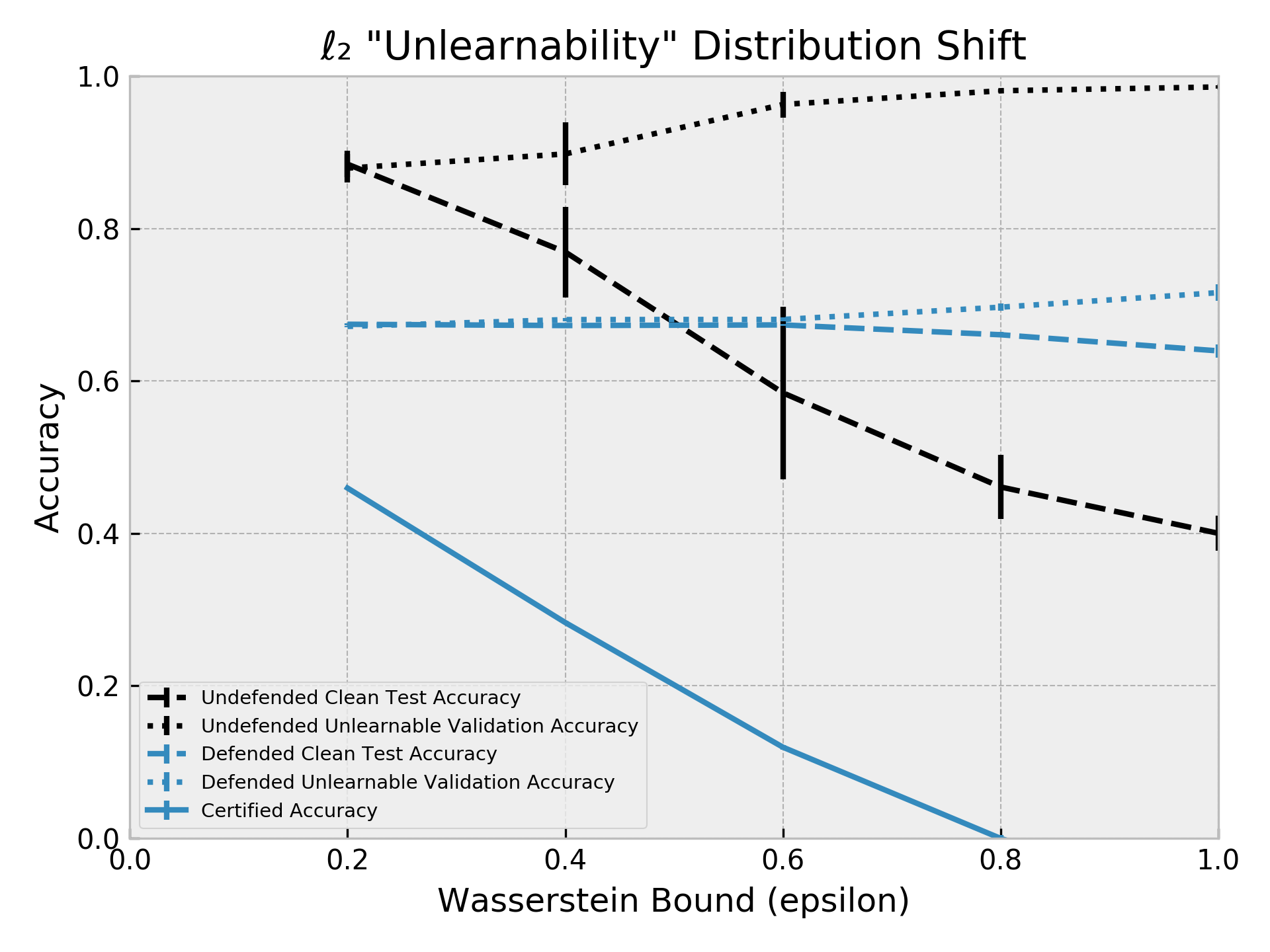}
    \caption{Unlearnability.}
    \label{fig:unlearnablity}
\end{subfigure}
\caption{Distributional certificates against (a) adversarial attacks and (b) unlearnable datasets. The smoothing noise used in part (b) is $0.4$. Results for other values are reported in the appendix.}

\end{figure}

Results on CIFAR-10 are presented in Figure \ref{fig:Image_space}. We use ResNet-110 models trained under noise from \cite{cohen19}. For the undefended baseline, (on an undefended classifier $g$), we first apply a Carlini and Wagner $\ell_2$ attack to each sample $x$ \citep{Carlini017}, generating adversarial examples $x'$. Define this attack as the function $CW(\cdot)$, such that $x' = CW(x, y; g)$, where $y$ is the ground-truth label. (If the attack fails, $CW(x, y; g) = x$). We then define a \textit{strategic} adversary  $\text{Adv}_\gamma$ that returns $CW(x, y; g)$ if $\|CW(x, y; g) - x\|_2 < \gamma $, otherwise it returns $x$.
By not attacking samples which would require the largest $\ell_2$ perturbations to cause misclassification, this attack efficiently balances maximizing misclassification rate with minimizing the Wasserstein distance between $\mathcal{D}$ and $\mathcal{\tilde{D}}$. The threshold parameter $\gamma$ controls the tradeoff between misclassifcation rate and the Wasserstein perturbation magnitude. The `Undefended' baseline in Figure \ref{fig:Image_space} plots the accuracy on attacked test samples under adversary $\text{Adv}_\gamma$, for a sweep of values of $\gamma$, against an upper bound on the Wasserstein distance, given by  $\mathbb{E}_{x \sim \mathcal{D}_{test}}  [\|\text{Adv}_\gamma(x) -  x\|_2 ]$. (Note that here we are using the finite test distribution $\mathcal{D}_{test}$, so this is technically an upper bound on  $ W_1^{d}(\mathcal{D}_{test}, \mathcal{\tilde{D}}_{test})$). 
We can observe a large gap between this undefended model performance under attack, and the certified robustness of our model, showing that our certificate is highly nonvacuous.
In the supplemental material,
we include results to show the empirical robustness of the smoothed classifiers under an ``adaptive'' attack, based on the attack on sample-wise $\ell_2$ smoothing proposed by \cite{SalmanLRZZBY19}.

\section{Hardness Results on Unlearnability}
\label{sec:unlearnability}

In this section, we show that the pixel-space $\ell_2$-Wasserstein distributional robustness certificate shown above can also be applied to establish a hardness result in creating provably ``unlearnable'' datasets  \cite{huang2021unlearnable}. In these datasets, every sample of the released data is ``poisoned'' so that the accuracy of a classifier trained on this data is high on both the training set and any other (i.e., validation) set split from the released dataset, while the test accuracy on non-poisoned samples drawn from the same distribution is low. This technique has legitimate applications, such as protecting privacy by preventing one's personal data from being learned, but may also have malicious uses (e.g., a malicious actor could sell a useless classifier that nevertheless has good performance on a provided validation set.) We can view the ``clean'' data distribution as $\mathcal{D}$, and the distribution of the poisoned samples (i.e., the unlearnable distribution) as $\mathcal{\tilde{D}}$. If the magnitude of the perturbations is limited, Theorem \ref{thm:dist-robust} implies that the accuracy on $\mathcal{D}$ and $\mathcal{\tilde{D}}$ must be similar, implying that our algorithm is provably resistant to unlearnablility attacks, effectively establishing provable hardness results to create unlearnable datasets. 

In order to apply our guarantees, we must make a few modifications to the attack proposed in \cite{huang2021unlearnable}. First, we bound each poisoning perturbation on the released dataset to within an $\epsilon$-radius $\ell_2$  ball, rather than an $\ell_\infty$ ball. From Equation \ref{eq:wass_exp}, this ensures that $W_1^{d}(\mathcal{D}, \mathcal{\tilde{D}}) \leq \epsilon.$

\looseness -1
Second, we consider an ``offline'' version of the attack. In the original attack \cite{huang2021unlearnable}, perturbations for the entire dataset are optimized simultaneously with a proxy classifier model in an iterative manner. This makes the perturbations applied to each sample non-I.I.D., (because they may depend on each other through proxy-model parameters) 
which makes deriving generalizable guarantees for it difficult. 
However, this simultaneous proxy-model training and poisoning may not always represent a realistic threat model. In particular, an actor releasing  ``unlearnable'' data at scale may not be able to constantly update the proxy model being used. For example, consider an ``unlearnability'' module in a camera, which would make photos unusable as training data. Because the camera itself has access to only a small number of photographs, such a module would likely rely on a fixed, pre-trained proxy classifier model to create the poisoning perturbations. To model this, we consider a threat model where the proxy classifier is first optimized using an unreleased dataset: the released ``unlearnable'' samples are then perturbed independently using this fixed proxy model. We see in Figure \ref{fig:unlearnablity} that our modified attack is still highly effective at making data unlearnable, as shown by the high validation and low test accuracy of the undefended baseline.

In Figure \ref{fig:unlearnablity}, we also show the performance of our algorithm on CIFAR-10 under unlearnability attack. We use an ``adaptive'' attack against the smoothed classifier, inspired by \cite{SalmanLRZZBY19}: details are presented in the appendix. The defense is empirically effective at thwarting unlearnability: the poisoned validation and clean test sets have similar accuracies, even at large perturbation size. However, overall accuracy is reduced substantially. Certified lower bounds on the clean accuracy, computed from the poisoned validation accuracy using Theorem \ref{thm:dist-robust}, are also given. 

\section{Conclusion}
\label{sec:limitations}
In this work, we show that it is possible to certify distributional robustness under natural as well as adversarial shifts in the data distribution.
Our method can make any model provably robust without increasing its sample complexity.
We certify accuracy with respect to the Wasserstein distance of the distribution shift which is a more suitable metric for out-of-distribution shifts than previously considered divergence measures such as KL-divergence and total variation.
The robustness guarantees we derive do not make any assumptions on the base model such as Lipschitz-continuity, making our method suitable for most real-world applications using deep neural network architectures.

We show that by appropriately parameterizing the transformation space, one can obtain meaningful certificates for natural shifts that have a high perturbation size in image space.
However, the distance functions we consider, such as $\ell_2$ and parameterized transformations, are predefined non-learnable functions which may not be suitable for modeling more sophisticated data shifts such as perceptual changes.
A future direction of research could be to adapt our distributional certificates for more difficult domains such as weather patterns, user preferences, facial expressions, etc.
We do not foresee any immediate negative impact of our work on society.
It seeks to make machine learning models more robust and reliable against unexpected data shifts in the real world.

\section{Acknowledgements}
This project was supported in part by NSF CAREER AWARD 1942230, a grant from NIST 60NANB20D134, HR001119S0026 (GARD), the Office of Naval Research (N000142112557), the ONR YIP award N00014-22-1-2271, Army Grant No. W911NF2120076 and NSF award CCF2212458.
Further support was provided by the AFOSR MURI program.

\bibliography{references}

\begin{thebibliography}{60}
\providecommand{\natexlab}[1]{#1}
\providecommand{\url}[1]{\texttt{#1}}
\expandafter\ifx\csname urlstyle\endcsname\relax
  \providecommand{\doi}[1]{doi: #1}\else
  \providecommand{\doi}{doi: \begingroup \urlstyle{rm}\Url}\fi

\bibitem[Dodge and Karam(2016)]{DodgeK16}
Samuel~F. Dodge and Lina~J. Karam.
\newblock Understanding how image quality affects deep neural networks.
\newblock \emph{CoRR}, abs/1604.04004, 2016.
\newblock URL \url{http://arxiv.org/abs/1604.04004}.

\bibitem[Geirhos et~al.(2018)Geirhos, Temme, Rauber, Sch{\"{u}}tt, Bethge, and
  Wichmann]{GeirhosTRSBW18}
Robert Geirhos, Carlos R.~Medina Temme, Jonas Rauber, Heiko~H. Sch{\"{u}}tt,
  Matthias Bethge, and Felix~A. Wichmann.
\newblock Generalisation in humans and deep neural networks.
\newblock In Samy Bengio, Hanna~M. Wallach, Hugo Larochelle, Kristen Grauman,
  Nicol{\`{o}} Cesa{-}Bianchi, and Roman Garnett, editors, \emph{Advances in
  Neural Information Processing Systems 31: Annual Conference on Neural
  Information Processing Systems 2018, NeurIPS 2018, December 3-8, 2018,
  Montr{\'{e}}al, Canada}, pages 7549--7561, 2018.

\bibitem[Azulay and Weiss(2019)]{Azulay2019}
Aharon Azulay and Yair Weiss.
\newblock Why do deep convolutional networks generalize so poorly to small
  image transformations?
\newblock \emph{Journal of Machine Learning Research}, 20\penalty0
  (184):\penalty0 1--25, 2019.
\newblock URL \url{http://jmlr.org/papers/v20/19-519.html}.

\bibitem[Engstrom et~al.(2019)Engstrom, Tran, Tsipras, Schmidt, and
  Madry]{EngstromTTSM19}
Logan Engstrom, Brandon Tran, Dimitris Tsipras, Ludwig Schmidt, and Aleksander
  Madry.
\newblock Exploring the landscape of spatial robustness.
\newblock In Kamalika Chaudhuri and Ruslan Salakhutdinov, editors,
  \emph{Proceedings of the 36th International Conference on Machine Learning,
  {ICML} 2019, 9-15 June 2019, Long Beach, California, {USA}}, volume~97 of
  \emph{Proceedings of Machine Learning Research}, pages 1802--1811. {PMLR},
  2019.
\newblock URL \url{http://proceedings.mlr.press/v97/engstrom19a.html}.

\bibitem[Fawzi and Frossard(2015)]{FawziF15}
Alhussein Fawzi and Pascal Frossard.
\newblock Manitest: Are classifiers really invariant?
\newblock \emph{CoRR}, abs/1507.06535, 2015.
\newblock URL \url{http://arxiv.org/abs/1507.06535}.

\bibitem[Alcorn et~al.(2019)Alcorn, Li, Gong, Wang, Mai, Ku, and
  Nguyen]{AlcornLGWMKN19}
Michael~A. Alcorn, Qi~Li, Zhitao Gong, Chengfei Wang, Long Mai, Wei{-}Shinn Ku,
  and Anh Nguyen.
\newblock Strike (with) a pose: Neural networks are easily fooled by strange
  poses of familiar objects.
\newblock In \emph{{IEEE} Conference on Computer Vision and Pattern
  Recognition, {CVPR} 2019, Long Beach, CA, USA, June 16-20, 2019}, pages
  4845--4854. Computer Vision Foundation / {IEEE}, 2019.
\newblock \doi{10.1109/CVPR.2019.00498}.

\bibitem[Vasiljevic et~al.(2016)Vasiljevic, Chakrabarti, and
  Shakhnarovich]{blur2016}
Igor Vasiljevic, Ayan Chakrabarti, and Gregory Shakhnarovich.
\newblock Examining the impact of blur on recognition by convolutional
  networks.
\newblock \emph{CoRR}, abs/1611.05760, 2016.
\newblock URL \url{http://arxiv.org/abs/1611.05760}.

\bibitem[Zhou et~al.(2017)Zhou, Song, and Cheung]{ZhouSC17}
Yiren Zhou, Sibo Song, and Ngai{-}Man Cheung.
\newblock On classification of distorted images with deep convolutional neural
  networks.
\newblock \emph{CoRR}, abs/1701.01924, 2017.
\newblock URL \url{http://arxiv.org/abs/1701.01924}.

\bibitem[Pearce and Hirsch(2000)]{Pearce2000TheAE}
David J.~B. Pearce and Hans-G{\"u}nter Hirsch.
\newblock The aurora experimental framework for the performance evaluation of
  speech recognition systems under noisy conditions.
\newblock In \emph{INTERSPEECH}, 2000.

\bibitem[Szegedy et~al.(2014)Szegedy, Zaremba, Sutskever, Bruna, Erhan,
  Goodfellow, and Fergus]{Szegedy2014}
Christian Szegedy, Wojciech Zaremba, Ilya Sutskever, Joan Bruna, Dumitru Erhan,
  Ian~J. Goodfellow, and Rob Fergus.
\newblock Intriguing properties of neural networks.
\newblock In \emph{2nd International Conference on Learning Representations,
  {ICLR} 2014, Banff, AB, Canada, April 14-16, 2014, Conference Track
  Proceedings}, 2014.

\bibitem[Carlini and Wagner(2017)]{Carlini017}
Nicholas Carlini and David~A. Wagner.
\newblock Adversarial examples are not easily detected: Bypassing ten detection
  methods.
\newblock In \emph{Proceedings of the 10th {ACM} Workshop on Artificial
  Intelligence and Security, AISec@CCS 2017, Dallas, TX, USA, November 3,
  2017}, pages 3--14, 2017.

\bibitem[Goodfellow et~al.(2015)Goodfellow, Shlens, and
  Szegedy]{GoodfellowSS14}
Ian~J. Goodfellow, Jonathon Shlens, and Christian Szegedy.
\newblock Explaining and harnessing adversarial examples.
\newblock In \emph{3rd International Conference on Learning Representations,
  {ICLR} 2015, San Diego, CA, USA, May 7-9, 2015, Conference Track
  Proceedings}, 2015.

\bibitem[Madry et~al.(2018)Madry, Makelov, Schmidt, Tsipras, and
  Vladu]{MadryMSTV18}
Aleksander Madry, Aleksandar Makelov, Ludwig Schmidt, Dimitris Tsipras, and
  Adrian Vladu.
\newblock Towards deep learning models resistant to adversarial attacks.
\newblock In \emph{6th International Conference on Learning Representations,
  {ICLR} 2018, Vancouver, BC, Canada, April 30 - May 3, 2018, Conference Track
  Proceedings}, 2018.

\bibitem[Biggio et~al.(2013)Biggio, Corona, Maiorca, Nelson, Srndic, Laskov,
  Giacinto, and Roli]{BiggioCMNSLGR13}
Battista Biggio, Igino Corona, Davide Maiorca, Blaine Nelson, Nedim Srndic,
  Pavel Laskov, Giorgio Giacinto, and Fabio Roli.
\newblock Evasion attacks against machine learning at test time.
\newblock In Hendrik Blockeel, Kristian Kersting, Siegfried Nijssen, and Filip
  Zelezn{\'{y}}, editors, \emph{Machine Learning and Knowledge Discovery in
  Databases - European Conference, {ECML} {PKDD} 2013, Prague, Czech Republic,
  September 23-27, 2013, Proceedings, Part {III}}, volume 8190 of \emph{Lecture
  Notes in Computer Science}, pages 387--402. Springer, 2013.
\newblock \doi{10.1007/978-3-642-40994-3\_25}.
\newblock URL \url{https://doi.org/10.1007/978-3-642-40994-3\_25}.

\bibitem[Taori et~al.(2020)Taori, Dave, Shankar, Carlini, Recht, and
  Schmidt]{TaoriDSCRS20}
Rohan Taori, Achal Dave, Vaishaal Shankar, Nicholas Carlini, Benjamin Recht,
  and Ludwig Schmidt.
\newblock Measuring robustness to natural distribution shifts in image
  classification.
\newblock In Hugo Larochelle, Marc'Aurelio Ranzato, Raia Hadsell,
  Maria{-}Florina Balcan, and Hsuan{-}Tien Lin, editors, \emph{Advances in
  Neural Information Processing Systems 33: Annual Conference on Neural
  Information Processing Systems 2020, NeurIPS 2020, December 6-12, 2020,
  virtual}, 2020.

\bibitem[Hendrycks and Dietterich(2019)]{HendrycksD19}
Dan Hendrycks and Thomas~G. Dietterich.
\newblock Benchmarking neural network robustness to common corruptions and
  perturbations.
\newblock In \emph{7th International Conference on Learning Representations,
  {ICLR} 2019, New Orleans, LA, USA, May 6-9, 2019}. OpenReview.net, 2019.
\newblock URL \url{https://openreview.net/forum?id=HJz6tiCqYm}.

\bibitem[Yang et~al.(2019)Yang, Wang, and Heinze{-}Deml]{YangWH19}
Fanny Yang, Zuowen Wang, and Christina Heinze{-}Deml.
\newblock Invariance-inducing regularization using worst-case transformations
  suffices to boost accuracy and spatial robustness.
\newblock In Hanna~M. Wallach, Hugo Larochelle, Alina Beygelzimer, Florence
  d'Alch{\'{e}}{-}Buc, Emily~B. Fox, and Roman Garnett, editors, \emph{Advances
  in Neural Information Processing Systems 32: Annual Conference on Neural
  Information Processing Systems 2019, NeurIPS 2019, December 8-14, 2019,
  Vancouver, BC, Canada}, pages 14757--14768, 2019.

\bibitem[Kim et~al.(2020)Kim, Tack, and Hwang]{KimTH20}
Minseon Kim, Jihoon Tack, and Sung~Ju Hwang.
\newblock Adversarial self-supervised contrastive learning.
\newblock In Hugo Larochelle, Marc'Aurelio Ranzato, Raia Hadsell,
  Maria{-}Florina Balcan, and Hsuan{-}Tien Lin, editors, \emph{Advances in
  Neural Information Processing Systems 33: Annual Conference on Neural
  Information Processing Systems 2020, NeurIPS 2020, December 6-12, 2020,
  virtual}, 2020.

\bibitem[Radford et~al.(2021)Radford, Kim, Hallacy, Ramesh, Goh, Agarwal,
  Sastry, Askell, Mishkin, Clark, Krueger, and Sutskever]{RadfordKHRGASAM21}
Alec Radford, Jong~Wook Kim, Chris Hallacy, Aditya Ramesh, Gabriel Goh,
  Sandhini Agarwal, Girish Sastry, Amanda Askell, Pamela Mishkin, Jack Clark,
  Gretchen Krueger, and Ilya Sutskever.
\newblock Learning transferable visual models from natural language
  supervision.
\newblock In Marina Meila and Tong Zhang, editors, \emph{Proceedings of the
  38th International Conference on Machine Learning, {ICML} 2021, 18-24 July
  2021, Virtual Event}, volume 139 of \emph{Proceedings of Machine Learning
  Research}, pages 8748--8763. {PMLR}, 2021.
\newblock URL \url{http://proceedings.mlr.press/v139/radford21a.html}.

\bibitem[Ge et~al.(2021)Ge, Mishra, Wang, Li, and
  Jacobs]{robust_contrastive_learning}
Songwei Ge, Shlok Mishra, Haohan Wang, Chun{-}Liang Li, and David~W. Jacobs.
\newblock Robust contrastive learning using negative samples with diminished
  semantics.
\newblock \emph{CoRR}, abs/2110.14189, 2021.
\newblock URL \url{https://arxiv.org/abs/2110.14189}.

\bibitem[Tram{\`{e}}r and Boneh(2019)]{TramerB19}
Florian Tram{\`{e}}r and Dan Boneh.
\newblock Adversarial training and robustness for multiple perturbations.
\newblock In Hanna~M. Wallach, Hugo Larochelle, Alina Beygelzimer, Florence
  d'Alch{\'{e}}{-}Buc, Emily~B. Fox, and Roman Garnett, editors, \emph{Advances
  in Neural Information Processing Systems 32: Annual Conference on Neural
  Information Processing Systems 2019, NeurIPS 2019, December 8-14, 2019,
  Vancouver, BC, Canada}, pages 5858--5868, 2019.

\bibitem[Shafahi et~al.(2019)Shafahi, Najibi, Ghiasi, Xu, Dickerson, Studer,
  Davis, Taylor, and Goldstein]{ShafahiNG0DSDTG19}
Ali Shafahi, Mahyar Najibi, Amin Ghiasi, Zheng Xu, John~P. Dickerson, Christoph
  Studer, Larry~S. Davis, Gavin Taylor, and Tom Goldstein.
\newblock Adversarial training for free!
\newblock In Hanna~M. Wallach, Hugo Larochelle, Alina Beygelzimer, Florence
  d'Alch{\'{e}}{-}Buc, Emily~B. Fox, and Roman Garnett, editors, \emph{Advances
  in Neural Information Processing Systems 32: Annual Conference on Neural
  Information Processing Systems 2019, NeurIPS 2019, December 8-14, 2019,
  Vancouver, BC, Canada}, pages 3353--3364, 2019.

\bibitem[Maini et~al.(2020)Maini, Wong, and Kolter]{MainiWK20}
Pratyush Maini, Eric Wong, and J.~Zico Kolter.
\newblock Adversarial robustness against the union of multiple perturbation
  models.
\newblock In \emph{Proceedings of the 37th International Conference on Machine
  Learning, {ICML} 2020, 13-18 July 2020, Virtual Event}, volume 119 of
  \emph{Proceedings of Machine Learning Research}, pages 6640--6650. {PMLR},
  2020.
\newblock URL \url{http://proceedings.mlr.press/v119/maini20a.html}.

\bibitem[Athalye et~al.(2018)Athalye, Carlini, and Wagner]{athalye18a}
Anish Athalye, Nicholas Carlini, and David Wagner.
\newblock Obfuscated gradients give a false sense of security: Circumventing
  defenses to adversarial examples.
\newblock In Jennifer Dy and Andreas Krause, editors, \emph{Proceedings of the
  35th International Conference on Machine Learning}, volume~80 of
  \emph{Proceedings of Machine Learning Research}, pages 274--283,
  Stockholmsmässan, Stockholm Sweden, 10--15 Jul 2018. PMLR.

\bibitem[Uesato et~al.(2018)Uesato, O'Donoghue, Kohli, and van~den
  Oord]{UesatoOKO18}
Jonathan Uesato, Brendan O'Donoghue, Pushmeet Kohli, and A{\"{a}}ron van~den
  Oord.
\newblock Adversarial risk and the dangers of evaluating against weak attacks.
\newblock In \emph{Proceedings of the 35th International Conference on Machine
  Learning, {ICML} 2018, Stockholmsm{\"{a}}ssan, Stockholm, Sweden, July 10-15,
  2018}, pages 5032--5041, 2018.

\bibitem[Laidlaw and Feizi(2019)]{LaidlawF19}
Cassidy Laidlaw and Soheil Feizi.
\newblock Functional adversarial attacks.
\newblock In Hanna~M. Wallach, Hugo Larochelle, Alina Beygelzimer, Florence
  d'Alch{\'{e}}{-}Buc, Emily~B. Fox, and Roman Garnett, editors, \emph{Advances
  in Neural Information Processing Systems 32: Annual Conference on Neural
  Information Processing Systems 2019, NeurIPS 2019, 8-14 December 2019,
  Vancouver, BC, Canada}, pages 10408--10418, 2019.
\newblock URL
  \url{http://papers.nips.cc/paper/9228-functional-adversarial-attacks}.

\bibitem[Laidlaw et~al.(2021)Laidlaw, Singla, and Feizi]{Laidlaw_perceptual}
Cassidy Laidlaw, Sahil Singla, and Soheil Feizi.
\newblock Perceptual adversarial robustness: Defense against unseen threat
  models.
\newblock In \emph{9th International Conference on Learning Representations,
  {ICLR} 2021, Virtual Event, Austria, May 3-7, 2021}. OpenReview.net, 2021.
\newblock URL \url{https://openreview.net/forum?id=dFwBosAcJkN}.

\bibitem[Cohen et~al.(2019)Cohen, Rosenfeld, and Kolter]{cohen19}
Jeremy Cohen, Elan Rosenfeld, and Zico Kolter.
\newblock Certified adversarial robustness via randomized smoothing.
\newblock In Kamalika Chaudhuri and Ruslan Salakhutdinov, editors,
  \emph{Proceedings of the 36th International Conference on Machine Learning},
  volume~97 of \emph{Proceedings of Machine Learning Research}, pages
  1310--1320, Long Beach, California, USA, 09--15 Jun 2019. PMLR.

\bibitem[L{\'{e}}cuyer et~al.(2019)L{\'{e}}cuyer, Atlidakis, Geambasu, Hsu, and
  Jana]{LecuyerAG0J19}
Mathias L{\'{e}}cuyer, Vaggelis Atlidakis, Roxana Geambasu, Daniel Hsu, and
  Suman Jana.
\newblock Certified robustness to adversarial examples with differential
  privacy.
\newblock In \emph{2019 {IEEE} Symposium on Security and Privacy, {SP} 2019,
  San Francisco, CA, USA, May 19-23, 2019}, pages 656--672, 2019.

\bibitem[Li et~al.(2019)Li, Chen, Wang, and Carin]{LiCWC19}
Bai Li, Changyou Chen, Wenlin Wang, and Lawrence Carin.
\newblock Certified adversarial robustness with additive noise.
\newblock In \emph{Advances in Neural Information Processing Systems 32: Annual
  Conference on Neural Information Processing Systems 2019, NeurIPS 2019, 8-14
  December 2019, Vancouver, BC, Canada}, pages 9459--9469, 2019.

\bibitem[Salman et~al.(2019)Salman, Li, Razenshteyn, Zhang, Zhang, Bubeck, and
  Yang]{SalmanLRZZBY19}
Hadi Salman, Jerry Li, Ilya~P. Razenshteyn, Pengchuan Zhang, Huan Zhang,
  S{\'{e}}bastien Bubeck, and Greg Yang.
\newblock Provably robust deep learning via adversarially trained smoothed
  classifiers.
\newblock In \emph{Advances in Neural Information Processing Systems 32: Annual
  Conference on Neural Information Processing Systems 2019, NeurIPS 2019, 8-14
  December 2019, Vancouver, BC, Canada}, pages 11289--11300, 2019.

\bibitem[Gowal et~al.(2018)Gowal, Dvijotham, Stanforth, Bunel, Qin, Uesato,
  Arandjelovic, Mann, and Kohli]{gowal2018effectiveness}
Sven Gowal, Krishnamurthy Dvijotham, Robert Stanforth, Rudy Bunel, Chongli Qin,
  Jonathan Uesato, Relja Arandjelovic, Timothy Mann, and Pushmeet Kohli.
\newblock On the effectiveness of interval bound propagation for training
  verifiably robust models, 2018.

\bibitem[Huang et~al.(2019)Huang, Stanforth, Welbl, Dyer, Yogatama, Gowal,
  Dvijotham, and Kohli]{HuangSWDYGDK19}
Po{-}Sen Huang, Robert Stanforth, Johannes Welbl, Chris Dyer, Dani Yogatama,
  Sven Gowal, Krishnamurthy Dvijotham, and Pushmeet Kohli.
\newblock Achieving verified robustness to symbol substitutions via interval
  bound propagation.
\newblock In \emph{Proceedings of the 2019 Conference on Empirical Methods in
  Natural Language Processing and the 9th International Joint Conference on
  Natural Language Processing, {EMNLP-IJCNLP} 2019, Hong Kong, China, November
  3-7, 2019}, pages 4081--4091, 2019.
\newblock \doi{10.18653/v1/D19-1419}.
\newblock URL \url{https://doi.org/10.18653/v1/D19-1419}.

\bibitem[Wong and Kolter(2018)]{WongK18}
Eric Wong and J.~Zico Kolter.
\newblock Provable defenses against adversarial examples via the convex outer
  adversarial polytope.
\newblock In \emph{Proceedings of the 35th International Conference on Machine
  Learning, {ICML} 2018, Stockholmsm{\"{a}}ssan, Stockholm, Sweden, July 10-15,
  2018}, pages 5283--5292, 2018.

\bibitem[Raghunathan et~al.(2018)Raghunathan, Steinhardt, and
  Liang]{Raghunathan2018}
Aditi Raghunathan, Jacob Steinhardt, and Percy Liang.
\newblock Semidefinite relaxations for certifying robustness to adversarial
  examples.
\newblock In \emph{Proceedings of the 32nd International Conference on Neural
  Information Processing Systems}, NIPS’18, page 10900–10910, Red Hook, NY,
  USA, 2018. Curran Associates Inc.

\bibitem[Singla and Feizi(2019)]{Singla2019}
Sahil Singla and Soheil Feizi.
\newblock Robustness certificates against adversarial examples for relu
  networks.
\newblock \emph{CoRR}, abs/1902.01235, 2019.

\bibitem[Singla and Feizi(2020)]{singla2020secondorder}
Sahil Singla and Soheil Feizi.
\newblock Second-order provable defenses against adversarial attacks.
\newblock In \emph{Proceedings of the 37th International Conference on Machine
  Learning, {ICML} 2020, 13-18 July 2020, Virtual Event}, volume 119 of
  \emph{Proceedings of Machine Learning Research}, pages 8981--8991. {PMLR},
  2020.
\newblock URL \url{http://proceedings.mlr.press/v119/singla20a.html}.

\bibitem[Levine and Feizi(2021)]{Levine-ICML21}
Alexander Levine and Soheil Feizi.
\newblock Improved, deterministic smoothing for {L1} certified robustness.
\newblock \emph{CoRR}, abs/2103.10834, 2021.
\newblock URL \url{https://arxiv.org/abs/2103.10834}.

\bibitem[Levine and Feizi(2020{\natexlab{a}})]{Levine2020patch}
Alexander Levine and Soheil Feizi.
\newblock (de)randomized smoothing for certifiable defense against patch
  attacks.
\newblock In Hugo Larochelle, Marc'Aurelio Ranzato, Raia Hadsell,
  Maria{-}Florina Balcan, and Hsuan{-}Tien Lin, editors, \emph{Advances in
  Neural Information Processing Systems 33: Annual Conference on Neural
  Information Processing Systems 2020, NeurIPS 2020, December 6-12, 2020,
  virtual}, 2020{\natexlab{a}}.

\bibitem[Levine and Feizi(2020{\natexlab{b}})]{LevineF20aaai}
Alexander Levine and Soheil Feizi.
\newblock Robustness certificates for sparse adversarial attacks by randomized
  ablation.
\newblock In \emph{The Thirty-Fourth {AAAI} Conference on Artificial
  Intelligence, {AAAI} 2020, The Thirty-Second Innovative Applications of
  Artificial Intelligence Conference, {IAAI} 2020, The Tenth {AAAI} Symposium
  on Educational Advances in Artificial Intelligence, {EAAI} 2020, New York,
  NY, USA, February 7-12, 2020}, pages 4585--4593. {AAAI} Press,
  2020{\natexlab{b}}.
\newblock URL \url{https://aaai.org/ojs/index.php/AAAI/article/view/5888}.

\bibitem[Kumar et~al.(2021)Kumar, Levine, and Feizi]{PolicySmoothing}
Aounon Kumar, Alexander Levine, and Soheil Feizi.
\newblock Policy smoothing for provably robust reinforcement learning.
\newblock \emph{CoRR}, abs/2106.11420, 2021.
\newblock URL \url{https://arxiv.org/abs/2106.11420}.

\bibitem[Wu et~al.(2021)Wu, Li, Huang, Vorobeychik, Zhao, and Li]{wu2021crop}
Fan Wu, Linyi Li, Zijian Huang, Yevgeniy Vorobeychik, Ding Zhao, and Bo~Li.
\newblock {CROP:} certifying robust policies for reinforcement learning through
  functional smoothing.
\newblock \emph{CoRR}, abs/2106.09292, 2021.
\newblock URL \url{https://arxiv.org/abs/2106.09292}.

\bibitem[Kumar and Goldstein(2021)]{kumar2021center}
Aounon Kumar and Tom Goldstein.
\newblock Center smoothing: Certified robustness for networks with structured
  outputs.
\newblock \emph{Advances in Neural Information Processing Systems}, 34, 2021.

\bibitem[Courty et~al.(2017)Courty, Flamary, Habrard, and
  Rakotomamonjy]{CourtyFHR17}
Nicolas Courty, R{\'{e}}mi Flamary, Amaury Habrard, and Alain Rakotomamonjy.
\newblock Joint distribution optimal transportation for domain adaptation.
\newblock In Isabelle Guyon, Ulrike von Luxburg, Samy Bengio, Hanna~M. Wallach,
  Rob Fergus, S.~V.~N. Vishwanathan, and Roman Garnett, editors, \emph{Advances
  in Neural Information Processing Systems 30: Annual Conference on Neural
  Information Processing Systems 2017, December 4-9, 2017, Long Beach, CA,
  {USA}}, pages 3730--3739, 2017.

\bibitem[Damodaran et~al.(2018)Damodaran, Kellenberger, Flamary, Tuia, and
  Courty]{DamodaranKFTC18}
Bharath~Bhushan Damodaran, Benjamin Kellenberger, R{\'{e}}mi Flamary, Devis
  Tuia, and Nicolas Courty.
\newblock Deepjdot: Deep joint distribution optimal transport for unsupervised
  domain adaptation.
\newblock In Vittorio Ferrari, Martial Hebert, Cristian Sminchisescu, and Yair
  Weiss, editors, \emph{Computer Vision - {ECCV} 2018 - 15th European
  Conference, Munich, Germany, September 8-14, 2018, Proceedings, Part {IV}},
  volume 11208 of \emph{Lecture Notes in Computer Science}, pages 467--483.
  Springer, 2018.
\newblock \doi{10.1007/978-3-030-01225-0\_28}.
\newblock URL \url{https://doi.org/10.1007/978-3-030-01225-0\_28}.

\bibitem[Lee and Raginsky(2018)]{LeeR18}
Jaeho Lee and Maxim Raginsky.
\newblock Minimax statistical learning with wasserstein distances.
\newblock In Samy Bengio, Hanna~M. Wallach, Hugo Larochelle, Kristen Grauman,
  Nicol{\`{o}} Cesa{-}Bianchi, and Roman Garnett, editors, \emph{Advances in
  Neural Information Processing Systems 31: Annual Conference on Neural
  Information Processing Systems 2018, NeurIPS 2018, December 3-8, 2018,
  Montr{\'{e}}al, Canada}, pages 2692--2701, 2018.

\bibitem[Wu et~al.(2019)Wu, Winston, Kaushik, and Lipton]{WuWKL19}
Yifan Wu, Ezra Winston, Divyansh Kaushik, and Zachary~C. Lipton.
\newblock Domain adaptation with asymmetrically-relaxed distribution alignment.
\newblock In Kamalika Chaudhuri and Ruslan Salakhutdinov, editors,
  \emph{Proceedings of the 36th International Conference on Machine Learning,
  {ICML} 2019, 9-15 June 2019, Long Beach, California, {USA}}, volume~97 of
  \emph{Proceedings of Machine Learning Research}, pages 6872--6881. {PMLR},
  2019.
\newblock URL \url{http://proceedings.mlr.press/v97/wu19f.html}.

\bibitem[Krizhevsky et~al.()Krizhevsky, Nair, and Hinton]{krizhevsky2014cifar}
Alex Krizhevsky, Vinod Nair, and Geoffrey Hinton.
\newblock Cifar-10 (canadian institute for advanced research).
\newblock URL \url{http://www.cs.toronto.edu/~kriz/cifar.html}.

\bibitem[Huang et~al.(2021)Huang, Ma, Erfani, Bailey, and
  Wang]{huang2021unlearnable}
Hanxun Huang, Xingjun Ma, Sarah~Monazam Erfani, James Bailey, and Yisen Wang.
\newblock Unlearnable examples: Making personal data unexploitable.
\newblock In \emph{ICLR}, 2021.

\bibitem[Pinot et~al.(2021)Pinot, Meunier, Yger, Gouy{-}Pailler, Chevaleyre,
  and Atif]{Pinot2021}
Rafael Pinot, Laurent Meunier, Florian Yger, C{\'{e}}dric Gouy{-}Pailler, Yann
  Chevaleyre, and Jamal Atif.
\newblock On the robustness of randomized classifiers to adversarial examples.
\newblock \emph{CoRR}, abs/2102.10875, 2021.
\newblock URL \url{https://arxiv.org/abs/2102.10875}.

\bibitem[Fischer et~al.(2020)Fischer, Baader, and Vechev]{FischerBV20}
Marc Fischer, Maximilian Baader, and Martin~T. Vechev.
\newblock Certified defense to image transformations via randomized smoothing.
\newblock In Hugo Larochelle, Marc'Aurelio Ranzato, Raia Hadsell,
  Maria{-}Florina Balcan, and Hsuan{-}Tien Lin, editors, \emph{Advances in
  Neural Information Processing Systems 33: Annual Conference on Neural
  Information Processing Systems 2020, NeurIPS 2020, December 6-12, 2020,
  virtual}, 2020.

\bibitem[Wong et~al.(2019)Wong, Schmidt, and Kolter]{WongSK19}
Eric Wong, Frank~R. Schmidt, and J.~Zico Kolter.
\newblock Wasserstein adversarial examples via projected sinkhorn iterations.
\newblock In Kamalika Chaudhuri and Ruslan Salakhutdinov, editors,
  \emph{Proceedings of the 36th International Conference on Machine Learning,
  {ICML} 2019, 9-15 June 2019, Long Beach, California, {USA}}, volume~97 of
  \emph{Proceedings of Machine Learning Research}, pages 6808--6817. {PMLR},
  2019.
\newblock URL \url{http://proceedings.mlr.press/v97/wong19a.html}.

\bibitem[Levine and Feizi(2019)]{levine2019wasserstein}
Alexander Levine and Soheil Feizi.
\newblock Wasserstein smoothing: Certified robustness against wasserstein
  adversarial attacks, 2019.

\bibitem[Shen et~al.(2018)Shen, Qu, Zhang, and Yu]{ShenQZY18}
Jian Shen, Yanru Qu, Weinan Zhang, and Yong Yu.
\newblock Wasserstein distance guided representation learning for domain
  adaptation.
\newblock In Sheila~A. McIlraith and Kilian~Q. Weinberger, editors,
  \emph{Proceedings of the Thirty-Second {AAAI} Conference on Artificial
  Intelligence, (AAAI-18), the 30th innovative Applications of Artificial
  Intelligence (IAAI-18), and the 8th {AAAI} Symposium on Educational Advances
  in Artificial Intelligence (EAAI-18), New Orleans, Louisiana, USA, February
  2-7, 2018}, pages 4058--4065. {AAAI} Press, 2018.
\newblock URL
  \url{https://www.aaai.org/ocs/index.php/AAAI/AAAI18/paper/view/17155}.

\bibitem[Sinha et~al.(2018)Sinha, Namkoong, and Duchi]{SinhaND18}
Aman Sinha, Hongseok Namkoong, and John~C. Duchi.
\newblock Certifying some distributional robustness with principled adversarial
  training.
\newblock In \emph{6th International Conference on Learning Representations,
  {ICLR} 2018, Vancouver, BC, Canada, April 30 - May 3, 2018, Conference Track
  Proceedings}. OpenReview.net, 2018.
\newblock URL \url{https://openreview.net/forum?id=Hk6kPgZA-}.

\bibitem[Ben{-}David et~al.(2006)Ben{-}David, Blitzer, Crammer, and
  Pereira]{Ben-DavidBCP06}
Shai Ben{-}David, John Blitzer, Koby Crammer, and Fernando Pereira.
\newblock Analysis of representations for domain adaptation.
\newblock In Bernhard Sch{\"{o}}lkopf, John~C. Platt, and Thomas Hofmann,
  editors, \emph{Advances in Neural Information Processing Systems 19,
  Proceedings of the Twentieth Annual Conference on Neural Information
  Processing Systems, Vancouver, British Columbia, Canada, December 4-7, 2006},
  pages 137--144. {MIT} Press, 2006.
\newblock URL
  \url{https://proceedings.neurips.cc/paper/2006/hash/b1b0432ceafb0ce714426e9114852ac7-Abstract.html}.

\bibitem[Zhao et~al.(2019)Zhao, des Combes, Zhang, and Gordon]{Zhao2019}
Han Zhao, Remi~Tachet des Combes, Kun Zhang, and Geoffrey~J. Gordon.
\newblock On learning invariant representations for domain adaptation.
\newblock In Kamalika Chaudhuri and Ruslan Salakhutdinov, editors,
  \emph{Proceedings of the 36th International Conference on Machine Learning,
  {ICML} 2019, 9-15 June 2019, Long Beach, California, {USA}}, volume~97 of
  \emph{Proceedings of Machine Learning Research}, pages 7523--7532. {PMLR},
  2019.
\newblock URL \url{http://proceedings.mlr.press/v97/zhao19a.html}.

\bibitem[Mehra et~al.(2021)Mehra, Kailkhura, Chen, and Hamm]{MehraKCH21}
Akshay Mehra, Bhavya Kailkhura, Pin{-}Yu Chen, and Jihun Hamm.
\newblock Understanding the limits of unsupervised domain adaptation via data
  poisoning.
\newblock In Marc'Aurelio Ranzato, Alina Beygelzimer, Yann~N. Dauphin, Percy
  Liang, and Jennifer~Wortman Vaughan, editors, \emph{Advances in Neural
  Information Processing Systems 34: Annual Conference on Neural Information
  Processing Systems 2021, NeurIPS 2021, December 6-14, 2021, virtual}, pages
  17347--17359, 2021.
\newblock URL
  \url{https://proceedings.neurips.cc/paper/2021/hash/90cc440b1b8caa520c562ac4e4bbcb51-Abstract.html}.

\bibitem[Clopper and Pearson(1934)]{clopper_conf_int}
C.~J. Clopper and E.~S. Pearson.
\newblock The use of confidence or fiducial limits illustrated in the case of
  the binomial.
\newblock \emph{Biometrika}, 26\penalty0 (4):\penalty0 404--413, 1934.
\newblock ISSN 00063444.
\newblock URL \url{http://www.jstor.org/stable/2331986}.

\bibitem[Nicolae et~al.(2018)Nicolae, Sinn, Tran, Buesser, Rawat, Wistuba,
  Zantedeschi, Baracaldo, Chen, Ludwig, Molloy, and Edwards]{art2018}
Maria-Irina Nicolae, Mathieu Sinn, Minh~Ngoc Tran, Beat Buesser, Ambrish Rawat,
  Martin Wistuba, Valentina Zantedeschi, Nathalie Baracaldo, Bryant Chen, Heiko
  Ludwig, Ian Molloy, and Ben Edwards.
\newblock Adversarial robustness toolbox v1.2.0.
\newblock \emph{CoRR}, 1807.01069, 2018.
\newblock URL \url{https://arxiv.org/pdf/1807.01069}.

\end{thebibliography}

\appendix
\section{Proof of Theorem~\ref{thm:dist-robust}}

\begin{statement}
Given a function $h: \mathcal{X} \times \mathcal{Y} \rightarrow [0, 1]$, define its smoothed version as $\bar{h}(x, y) = \mathbb{E}_{x' \sim \mathcal{S}(x)} [h(x', y)]$.
Then,
\[\left|\mathbb{E}_{(x_1, y_1) \sim \mathcal{D}} [\bar{h}(x_1, y_1)] - \mathbb{E}_{(x_2, y_2) \sim \mathcal{\tilde{D}}} [\bar{h}(x_2, y_2)]\right| \leq \psi(\epsilon).\]
\end{statement}

\begin{proof}
Let $\tau_1 = (x_1, y_1)$ and $\tau_2 = (x_2, y_2)$ denote the input-output tuples sampled from $\mathcal{D}$ and $\mathcal{\tilde{D}}$ respectively.
Then, for the joint distribution $\gamma^* \in \Gamma(\mathcal{D}, \mathcal{\tilde{D}})$ in~(\ref{eq:wasserstein_bnd}), we have
\[ \mathbb{E}_{\tau_1 \sim \mathcal{D}} [\bar{h}(\tau_1)] = \mathbb{E}_{(\tau_1, \tau_2) \sim \gamma^*} [\bar{h}(\tau_1)] \quad \text{and} \quad \mathbb{E}_{\tau_2 \sim \mathcal{\tilde{D}}} [\bar{h}(\tau_2)] = \mathbb{E}_{(\tau_1, \tau_2) \sim \gamma^*} [\bar{h}(\tau_2)].\]
This is because when $(\tau_1, \tau_2)$ is sampled from the joint distribution $\gamma^*$, $\tau_1$ and $\tau_2$ individually have distributions $\mathcal{D}$ and $\mathcal{\tilde{D}}$ respectively.
Also, since the expected distance between $\tau_1 = (x_1, y_1)$ and $\tau_2 = (x_2, y_2)$ is finite, the output elements of the sampled tuples must be the same, i.e. $y_1 = y_2 = y$ (say).
See lemma~\ref{lem:zero-prob} below.
Then,
\begin{align*}
    \big| \mathbb{E}_{(x_1, y_1) \sim \mathcal{D}} [\bar{h}(x_1, y_1)] &- \mathbb{E}_{(x_2, y_2) \sim \mathcal{\tilde{D}}} [\bar{h}(x_2, y_2)] \big|\\
    &= \left| \mathbb{E}_{\tau_1 \sim \mathcal{D}} [\bar{h}(\tau_1)] - \mathbb{E}_{\tau_2 \sim \mathcal{\tilde{D}}} [\bar{h}(\tau_2)] \right|\\
    &= \left|\mathbb{E}_{(\tau_1, \tau_2) \sim \gamma^*} [\bar{h}(\tau_1) - \bar{h}(\tau_2)]\right|\\
    & \leq \mathbb{E}_{(\tau_1, \tau_2) \sim \gamma^*} [|\bar{h}(\tau_1) - \bar{h}(\tau_2)|].
\end{align*}

Now, from definition~(\ref{eq:smooth-fn}) and for $i = 1$ and $2$,
\[\bar{h}(\tau_i) = \bar{h}(x_i, y) = \mathbb{E}_{x_i' \sim \mathcal{S}(x_i)} [h(x_i', y)] = \mathbb{E}_{x_i' \sim \mathcal{S}(x_i)} [g(x_i')]\]
can be expressed as the expected value of a function $g: \mathcal{X} \rightarrow [0, 1]$ under distribution $\mathcal{S}(x_i)$.
Without loss of generality, assume $\mathbb{E}_{x_1' \sim \mathcal{S}(x_1)} [g(x_1')] \geq \mathbb{E}_{x_2' \sim \mathcal{S}(x_2)} [g(x_2')]$. Then,

\begin{align*}
    \big| \mathbb{E}_{x_1' \sim \mathcal{S}(x_1)} [g(x_1')] &- \mathbb{E}_{x_2' \sim \mathcal{S}(x_2)} [g(x_2')] \big|\\
    & = \int_{\mathcal{X}} g(x) \mu_1(x) dx - \int_{\mathcal{X}} g(x) \mu_2(x) dx \tag{$\mu_1$ and $\mu_2$ are the PDFs of $\mathcal{S}(x_1)$ and $\mathcal{S}(x_1)$}\\
    &= \int_{\mathcal{X}} g(x) (\mu_1(x) - \mu_2(x)) dx \\
    &= \int_{\mu_1 > \mu_2} g(x) (\mu_1(x) - \mu_2(x)) dx - \int_{\mu_2 > \mu_1} g(x) (\mu_2(x) - \mu_1(x)) dx\\
    &\leq \int_{\mu_1 > \mu_2} \max_{x' \in \mathcal{X}}g(x') (\mu_1(x) - \mu_2(x)) dx - \int_{\mu_2 > \mu_1} \min_{x' \in \mathcal{X}}g(x') (\mu_2(x) - \mu_1(x)) dx\\
    &\leq \int_{\mu_1 > \mu_2} (\mu_1(x) - \mu_2(x)) dz \tag{since $\max_{x' \in \mathcal{X}} g(x') \leq 1$ and $\min_{x' \in \mathcal{X}} g(x') \geq 0$}\\
    &= \frac{1}{2} \int_{\mathcal{X}} |\mu_1(x) - \mu_2(x)| dx =\TV(\mathcal{S}(x_1), \mathcal{S}(x_2)). \tag{since $\int_{\mu_1 > \mu_2} (\mu_1(x) - \mu_2(x)) dx = \int_{\mu_2 > \mu_1} (\mu_2(x) - \mu_1(x)) dx = \frac{1}{2} \int_{\mathcal{X}} |\mu_1(x) - \mu_2(x)| dx$}\\
\end{align*}
Thus, from~(\ref{def:shift-dist}) and~(\ref{eq:tv_bnd}), we have $|\bar{h}(\tau_1) - \bar{h}(\tau_2)| \leq \psi(d_\mathcal{X}(x_1, x_2)) = \psi(d(\tau_1, \tau_2))$, and therefore,
\begin{align*}
    \big| \mathbb{E}_{(x_1, y_1) \sim \mathcal{D}} [\bar{h}(x_1, y_1)] &- \mathbb{E}_{(x_2, y_2) \sim \mathcal{\tilde{D}}} [\bar{h}(x_2, y_2)] \big|\\
    & \leq \mathbb{E}_{(\tau_1, \tau_2) \sim \gamma^*} [\psi(d(\tau_1, \tau_2))]\\
    & \leq \psi \left( \mathbb{E}_{(\tau_1, \tau_2) \sim \gamma^*} [d(\tau_1, \tau_2)] \right). \tag{$\psi$ is concave, Jensen's inequality}
\end{align*}
Hence, from~(\ref{eq:wasserstein_bnd}) and since $\psi$ is non-decreasing, we have
\[\left| \mathbb{E}_{(x_1, y_1) \sim \mathcal{D}} [\bar{h}(x_1, y_1)] -  \mathbb{E}_{(x_2, y_2) \sim \mathcal{\tilde{D}}} [\bar{h}(x_2, y_2)] \right| \leq \psi(\epsilon).\]
\end{proof}

\begin{lemma}
\label{lem:zero-prob}
Let $\Omega = \{(\tau_1, \tau_2) \text{ s.t. } y_1 \neq y_2 \text{ where } \tau_1 = (x_1, y_1) \text{ and } \tau_2 = (x_2, y_2)\}$. Then
\[\mathbb{P}_{(\tau_1, \tau_2) \sim \gamma^*}[(\tau_1, \tau_2) \in \Omega] = 0.\]
\end{lemma}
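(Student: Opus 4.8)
The plan is to argue by contradiction using the fact that $\gamma^*$ achieves a \emph{finite} expected distance, combined with the structure of the distance function $d$ in~(\ref{def:shift-dist}), which takes the value $+\infty$ precisely on the set $\Omega$ where the labels disagree. First I would suppose, for contradiction, that $\mathbb{P}_{(\tau_1,\tau_2)\sim\gamma^*}[(\tau_1,\tau_2)\in\Omega] = p > 0$. By definition of $d$, on the event $\Omega$ we have $d(\tau_1,\tau_2) = \infty$, while on the complement $d(\tau_1,\tau_2) \geq 0$. Therefore
\[
\mathbb{E}_{(\tau_1,\tau_2)\sim\gamma^*}[d(\tau_1,\tau_2)] \;\geq\; \mathbb{P}[\Omega]\cdot\infty \;+\; \mathbb{P}[\Omega^c]\cdot 0 \;=\; \infty,
\]
which contradicts the Wasserstein bound~(\ref{eq:wasserstein_bnd}) stating that $\mathbb{E}_{(\tau_1,\tau_2)\sim\gamma^*}[d(\tau_1,\tau_2)] \leq \epsilon < \infty$. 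Hence $p = 0$, as claimed.

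To make this rigorous I would be slightly careful about the arithmetic with $+\infty$: rather than writing ``$p\cdot\infty$'' directly, I would observe that for any finite constant $M > 0$, the function $d_M := \min(d, M)$ satisfies $d_M \leq d$ pointwise, so $\mathbb{E}_{\gamma^*}[d_M] \leq \mathbb{E}_{\gamma^*}[d] \leq \epsilon$. On $\Omega$ we have $d_M = M$, giving $\mathbb{E}_{\gamma^*}[d_M] \geq M\,\mathbb{P}[\Omega] = Mp$. Thus $Mp \leq \epsilon$ for every $M > 0$, which forces $p = 0$. This avoids any appeal to conventions about $0 \cdot \infty$ or integration of extended-real-valued functions.

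The only genuine subtlety — and the step I would flag as the main thing to get right rather than a true obstacle — is measurability: one must know that $\Omega$ is a $\gamma^*$-measurable set so that $\mathbb{P}_{\gamma^*}[\Omega]$ is well defined, and that $d$ (or $d_M$) is a measurable function so that the displayed expectations make sense. This is immediate if $\mathcal{Y}$ carries a $\sigma$-algebra for which the diagonal $\{(y,y)\}$ is measurable (e.g.\ $\mathcal{Y}$ countable with the discrete $\sigma$-algebra, or any standard Borel space), since $\Omega$ is then the preimage of the complement of the diagonal under the projection $(\tau_1,\tau_2)\mapsto(y_1,y_2)$; in fact $\Omega = d^{-1}(\{\infty\})$ by~(\ref{def:shift-dist}), so measurability of $d$ already delivers measurability of $\Omega$. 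Since the existence of the coupling $\gamma^*$ attaining~(\ref{eq:wasserstein_bnd}) is assumed in Section~\ref{sec:notations}, everything needed is already in place, and the proof reduces to the two short displays above.
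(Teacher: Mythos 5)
Your proposal is correct and takes essentially the same approach as the paper: both argue by contradiction from the finiteness of $\mathbb{E}_{\gamma^*}[d]$, exploiting that $d = \infty$ on $\Omega$, and both sidestep $\infty$-arithmetic by working with a large but finite lower bound (your truncation $d_M = \min(d, M)$ plays exactly the role of the paper's choice of $l > \epsilon/p$). Your measurability remark is a reasonable addition the paper leaves implicit, but the core argument is the same.
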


\begin{proof}
Assume, for the sake of contradiction, that
\[\mathbb{P}_{(\tau_1, \tau_2) \sim \gamma^*}[(\tau_1, \tau_2) \in \Omega] \geq p\]
for some $p > 0$.
From condition~(\ref{eq:wasserstein_bnd}), we have
\[\mathbb{E}_{(\tau_1, \tau_2) \sim \gamma^*}[d(\tau_1, \tau_2)] \leq \epsilon.\]
By the law of total expectation
\begin{align*}
    \mathbb{E}_{\gamma^*}[d(\tau_1, \tau_2)] = &\mathbb{E}_{\gamma^*}[d(\tau_1, \tau_2) \mid (\tau_1, \tau_2) \in \Omega] \; \mathbb{P}_{\gamma^*}[(\tau_1, \tau_2) \in \Omega]\\
    + &\mathbb{E}_{\gamma^*}[d(\tau_1, \tau_2) \mid (\tau_1, \tau_2) \notin \Omega] \; \mathbb{P}_{\gamma^*}[(\tau_1, \tau_2) \notin \Omega].
\end{align*}
We replace $(\tau_1, \tau_2) \sim \gamma^*$ with just $\gamma^*$ in the subscripts for brevity. Since both summands are non-negative,
\[\mathbb{E}_{\gamma^*}[d(\tau_1, \tau_2) \mid (\tau_1, \tau_2) \in \Omega] \; \mathbb{P}_{\gamma^*}[(\tau_1, \tau_2) \in \Omega] \leq \epsilon.\]
Consider a real number $l > \epsilon / p$.
Then, for any $(\tau_1, \tau_2) \in \Omega$, from definition~(\ref{def:shift-dist}) and because $y_1 \neq y_2$, $d(\tau_1, \tau_2) \geq l$.
Therefore, $\mathbb{E}_{\gamma^*}[d(\tau_1, \tau_2) \mid (\tau_1, \tau_2) \in \Omega] \geq l$ and
\begin{align*}
    l \; \mathbb{P}_{\gamma^*}[(\tau_1, \tau_2) \in \Omega] &\leq \mathbb{E}_{\gamma^*}[d(\tau_1, \tau_2) \mid (\tau_1, \tau_2) \in \Omega] \; \mathbb{P}_{\gamma^*}[(\tau_1, \tau_2) \in \Omega]\\
    l \; \mathbb{P}_{\gamma^*}[(\tau_1, \tau_2) \in \Omega] &\leq \epsilon\\
    \mathbb{P}_{\gamma^*}[(\tau_1, \tau_2) \in \Omega] &\leq \epsilon / l < p,
\end{align*}
which contradicts our initial assumption.
\end{proof}

\section{Proof of Lemma~\ref{lem:tv_bnd_transform_dist}}
\label{sec:proof_lem_tv_bnd_transform_dist}
\begin{statement}
For two points $x_1, x_2 \in \mathcal{X}$ such that $d_{\mathcal{T}}(x_1, x_2)$ is finite,
\[\TV (\mathcal{S}(x_1), \mathcal{S}(x_2)) \leq \psi(d_{\mathcal{T}}(x_1, x_2)). \]
\end{statement}

\begin{proof}
Consider the $\theta$ for which $d_{\mathcal{T}}(x_1, x_2) = \|\theta\|$.
Then, $\mathcal{T}(x_1, \theta) = x_2$.
\begin{align*}
    \TV (\mathcal{S}(x), \mathcal{S}(x_2)) &= \TV (\mathcal{T}(x, \mathcal{Q}(0)), \mathcal{T}(z, \mathcal{Q}(0)))\\
    &=\TV (\mathcal{T}(x, \mathcal{Q}(0)), \mathcal{T}(\mathcal{T}(x, \theta), \mathcal{Q}(0)))\\
    &=\TV (\mathcal{T}(x, \mathcal{Q}(0)), \mathcal{T}(x, \theta + \mathcal{Q}(0)))\tag{additive composability, equation~(\ref{eq:add_comp})}\\
    &=\TV (\mathcal{T}(x, \mathcal{Q}(0)), \mathcal{T}(x, \mathcal{Q}(\theta))). \tag{definition of $\mathcal{Q}$}
\end{align*}
Let $A$ be the event in the space $M$ that maximizes the difference in the probabilities assigned to $A$ by $\mathcal{T}(x, \mathcal{Q}(0))$ and $\mathcal{T}(x, \mathcal{Q}(\theta))$.
Let $u: P \rightarrow [0,1]$ be a function that returns the probability (over the randomness of $\mathcal{T}$) of any parameter $\eta \in P$ being mapped to a point in $A$, i.e., $u(\eta) = \mathbb{P}\{\mathcal{T}(x, \eta) \in A\}$.
For a deterministic transformation $\mathcal{T}$, $u$ is a 0/1 function.
Then, the probabilities assigned by $\mathcal{T}(x, \mathcal{Q}(0))$ and $\mathcal{T}(x, \mathcal{Q}(\theta))$ to $A$ is equal to $\mathbb{E}_{\eta \sim \mathcal{Q}(0)}[u(\eta)]$ and $\mathbb{E}_{\eta \sim \mathcal{Q}(\theta)}[u(\eta)]$.
Therefore,
\begin{align*}
    \TV (\mathcal{S}(x), \mathcal{S}(x_2)) &= |\mathbb{E}_{\eta \sim \mathcal{Q}(0)}[u(\eta)] - \mathbb{E}_{\eta \sim \mathcal{Q}(\theta)}[u(\eta)]|\\
    &\leq \TV (\mathcal{Q}(0), \mathcal{Q}(\theta))\\
    & \leq \psi(\|\theta\|) = \psi(d_{\mathcal{T}}(x_1, x_2)). \tag{definition of $\mathcal{Q}$ and $d_{\mathcal{T}}$}
\end{align*}

\end{proof}

\section{Function $\psi$ for Different Distributions}
\label{sec:psi-functions}
For an isometric Gaussian distribution $\mathcal{N}(0, \sigma^2 I)$,
\[\TV (\mathcal{N}(0, \sigma^2 I), \mathcal{N}(\theta, \sigma^2 I)) = \erf (\|\theta\|_2/2 \sqrt{2} \sigma).\]

\begin{proof}
Due to the isometric symmetry of the Gaussian distribution and the $\ell_2$-norm, we may assume, without loss of generality, that $\mathcal{N}(\theta, \sigma^2 I)$ is obtained by shifting $\mathcal{N}(0, \sigma^2 I)$ only along the first dimension.
Therefore, the total variation of the two distributions is equal to the difference in the probability of a normal random variable with variance $\sigma^2$ being less than $\|\theta\|_2/2$ and $-\|\theta\|_2/2$, i.e., $\Phi(\|\theta\|_2/2 \sigma) - \Phi(-\|\theta\|_2/2 \sigma)$ where $\Phi$ is the standard normal CDF.
\begin{align*}
    \TV (\mathcal{N}(0, \sigma^2 I), \mathcal{N}(\theta, \sigma^2 I)) &= \Phi(\|\theta\|_2/2 \sigma) - \Phi(-\|\theta\|_2/2 \sigma)\\
    &= 2\Phi(\|\theta\|_2/2 \sigma) - 1\\
    &= 2 \left( \frac{1 + \erf( \|\theta\|_2 / 2 \sqrt{2} \sigma)}{2} \right) - 1\\
    &= \erf( \|\theta\|_2 / 2 \sqrt{2} \sigma).
\end{align*}
\end{proof}

For a uniform distribution $\mathcal{U}(\theta, b)$ between $\theta_i$ and $\theta_i + b$ in each dimension for $b \geq 0$ (as used for the SV shift transformations), $\TV (\mathcal{U}(0, b), \mathcal{U}(\theta, b)) \leq \|\theta\|_1/b$.
When $\|\theta\|_1$ is constrained, the volume of the overlap between $\mathcal{U}(0, b)$ and $\mathcal{U}(\theta, b)$ is minimized when the shift is only along one dimension.

\section{Additive Composability of Natural Transformations}
\label{sec:add-comp-proof}

In this section, we prove that the natural transformation $\mathsf{CS}, \mathsf{HS}$ and $\mathsf{SV}$ defined in the paper all satisfy the additive composability property in condition~(\ref{eq:add_comp}).

\begin{lemma}
The transformation $\mathsf{CS}$ satisfies the additive composability property, i.e., $\forall x \in M, \theta_1, \theta_2 \in \mathbb{R}^3$,
\[\mathsf{CS}(\mathsf{CS}(x, \theta_1), \theta_2) = \mathsf{CS}(x, \theta_1 + \theta_2).\]
\end{lemma}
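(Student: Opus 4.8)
The plan is a direct computation: I would unfold the two nested applications of $\mathsf{CS}$, track how the per-channel scale factors and the normalization constant evolve, and verify that the two normalization constants telescope so that the composite equals $\mathsf{CS}(x,\theta_1+\theta_2)$.

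First I would fix notation. Write $\theta_1 = (a_R, a_G, a_B)$ and $\theta_2 = (b_R, b_G, b_B)$, and let $r_{\max}, g_{\max}, b_{\max}$ be the channel maxima of $x$. Set $N_1 = \max(2^{a_R} r_{\max},\, 2^{a_G} g_{\max},\, 2^{a_B} b_{\max})$, so that by definition the pixel values of $x' := \mathsf{CS}(x,\theta_1)$ are $(2^{a_R} r/N_1,\, 2^{a_G} g/N_1,\, 2^{a_B} b/N_1)_{ij}$. Next I would record the channel maxima of $x'$: since multiplication by the positive constant $2^{a_R}$ followed by division by $N_1>0$ is strictly increasing, the pixel location attaining the red-channel maximum of $x$ still attains it in $x'$, so $r'_{\max} = 2^{a_R} r_{\max}/N_1$, and analogously $g'_{\max} = 2^{a_G} g_{\max}/N_1$, $b'_{\max} = 2^{a_B} b_{\max}/N_1$. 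As a sanity check, $\max(r'_{\max}, g'_{\max}, b'_{\max}) = 1$, so $x'$ is again a legitimately normalized image to which $\mathsf{CS}(\cdot,\theta_2)$ applies.

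Then I would apply $\mathsf{CS}(\cdot,\theta_2)$ to $x'$. Its normalization constant is $N_2 = \max(2^{b_R} r'_{\max},\, 2^{b_G} g'_{\max},\, 2^{b_B} b'_{\max}) = \tfrac{1}{N_1}\max(2^{a_R+b_R} r_{\max},\, 2^{a_G+b_G} g_{\max},\, 2^{a_B+b_B} b_{\max}) = N_{12}/N_1$, where $N_{12}$ is exactly the normalization constant appearing in $\mathsf{CS}(x,\theta_1+\theta_2)$ (because $\theta_1+\theta_2 = (a_R+b_R, a_G+b_G, a_B+b_B)$). Hence the red entry of $\mathsf{CS}(\mathsf{CS}(x,\theta_1),\theta_2)$ at pixel $ij$ is $\dfrac{2^{b_R}(2^{a_R} r/N_1)}{N_2} = \dfrac{2^{a_R+b_R} r}{N_1 N_2} = \dfrac{2^{a_R+b_R} r}{N_{12}}$, which is precisely the red entry of $\mathsf{CS}(x,\theta_1+\theta_2)$; the green and blue entries are handled identically. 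Since this holds at every pixel $(i,j)$, the two images are equal.

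I do not expect a genuine obstacle: the only thing requiring care is the bookkeeping of the normalization denominators, namely the cancellation $N_1 N_2 = N_{12}$, which in turn relies on the clean closed form for $r'_{\max}, g'_{\max}, b'_{\max}$, i.e. on the observation that a positive rescaling plus uniform normalization does not move the location of a channel's maximum. One should also implicitly assume the image is not identically zero (equivalently $\max(r_{\max},g_{\max},b_{\max})=1$, as stated in Section~\ref{sec:color-shift}), so that every normalization constant is strictly positive and the divisions are well defined.
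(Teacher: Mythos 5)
Your proposal is correct and follows essentially the same route as the paper's proof: compute the channel maxima of $x' = \mathsf{CS}(x,\theta_1)$, observe that the second normalization constant factors as the product normalization divided by the first, and let the denominators telescope. The only additions are cosmetic: you explicitly note that positive rescaling preserves the argmax of each channel and flag the degenerate all-zero image, both of which the paper leaves implicit.
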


\begin{proof}
Let $x = \{(r, g, b)_{ij} \}^{H \times W}, x' = \{(r', g', b')_{ij} \}^{H \times W} = \mathsf{CS}(x, \theta_1)$ and $x'' = \{(r'', g'', b'')_{ij} \}^{H \times W} = \mathsf{CS}(x', \theta_2)$. We need to show that $x'' = \mathsf{CS}(x, \theta_1 + \theta_2)$.
Let $r_{\max}, g_{\max}$ and $b_{\max}$ be the maximum values of the red, green and blue channels respectively of $x$ and $r'_{\max}, g'_{\max}$ and $b'_{\max}$ be the same for $x'$.
From the definition of $\mathsf{CS}$ in section~\ref{sec:color-shift}, we have:
\begin{align*}
r'_{ij} &= \frac{2^{\theta_1^R} r_{ij} }{\mathsf{MAX}}, \quad g'_{ij} = \frac{2^{\theta_1^G} g_{ij} }{\mathsf{MAX}}, \quad b'_{ij} = \frac{2^{\theta_1^B} b_{ij} }{\mathsf{MAX}}\\
\text{and} \quad r''_{ij} &= \frac{2^{\theta_2^R} r'_{ij} }{\mathsf{MAX'}}, \quad g''_{ij} = \frac{2^{\theta_2^G} g'_{ij} }{\mathsf{MAX'}}, \quad b''_{ij} = \frac{2^{\theta_2^B} b'_{ij} }{\mathsf{MAX'}}
\end{align*}
where $\mathsf{MAX} = \max (2^{\theta_1^R} r_{\max}, 2^{\theta_1^G} g_{\max}, 2^{\theta_1^B} b_{\max})$ and $\mathsf{MAX'} = \max (2^{\theta_2^R} r'_{\max}, 2^{\theta_2^G} g'_{\max}, 2^{\theta_2^B} b'_{\max})$.
From the definition of $r'_{\max}$, we have:
\[r'_{\max} = \max{r'_{ij}} = \max{ \frac{2^{\theta_1^R} r_{ij} }{\mathsf{MAX}} } = \frac{2^{\theta_1^R} \max{r_{ij}} }{\mathsf{MAX}} = \frac{2^{\theta_1^R} r_{\max}}{\mathsf{MAX}}.\]
Similarly,
\[g'_{\max} = \frac{2^{\theta_1^G} g_{\max}}{\mathsf{MAX}} \quad \text{and} \quad b'_{\max} = \frac{2^{\theta_1^B} b_{\max}}{\mathsf{MAX}}.\]
Therefore,
\[\mathsf{MAX'} = \frac{\max (2^{\theta_1^R + \theta_2^R} r_{\max}, 2^{\theta_1^G + \theta_2^G} g_{\max}, 2^{\theta_1^B + \theta_2^B} b_{\max})}{\textsf{MAX}}.\]
Substituting $r'_{ij}$ and $\textsf{MAX'}$ in the expression for $r''_{ij}$, we get:
\[r''_{ij} = \frac{2^{\theta_2^R} 2^{\theta_1^R} r_{ij} }{\mathsf{MAX'} \mathsf{MAX}} = \frac{2^{\theta_1^R + \theta_2^R} r_{ij} }{\max (2^{\theta_1^R + \theta_2^R} r_{\max}, 2^{\theta_1^G + \theta_2^G} g_{\max}, 2^{\theta_1^B + \theta_2^B} b_{\max})}.\]
Similarly,
\[g''_{ij} = \frac{2^{\theta_1^G + \theta_2^G} g_{ij} }{\max (2^{\theta_1^R + \theta_2^R} r_{\max}, 2^{\theta_1^G + \theta_2^G} g_{\max}, 2^{\theta_1^B + \theta_2^B} b_{\max})} \quad \text{and} \quad b''_{ij} = \frac{2^{\theta_1^B + \theta_2^B} b_{ij} }{\max (2^{\theta_1^R + \theta_2^R} r_{\max}, 2^{\theta_1^G + \theta_2^G} g_{\max}, 2^{\theta_1^B + \theta_2^B} b_{\max})}.\]
Hence, $x'' = \mathsf{CS}(x, \theta_1 + \theta_2)$.
\end{proof}

\begin{lemma}
The transformation $\mathsf{SV}$ satisfies the additive composability property, i.e., $\forall x \in M, \theta_1, \theta_2 \in \mathbb{R}_{\geq 0}^2$,
\[\mathsf{SV}(\mathsf{SV}(x, \theta_1), \theta_2) = \mathsf{SV}(x, \theta_1 + \theta_2).\]
\end{lemma}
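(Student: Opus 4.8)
The plan is to follow the same template as the proof of additive composability for $\mathsf{CS}$ above. Write $x = \{(h,s,v)_{ij}\}^{H\times W}$, and set $x' = \{(h,s',v')_{ij}\}^{H\times W} = \mathsf{SV}(x,\theta_1)$ and $x'' = \{(h,s'',v'')_{ij}\}^{H\times W} = \mathsf{SV}(x',\theta_2)$; since $\mathsf{SV}$ leaves the hue channel untouched, it suffices to show that $s''_{ij}$ and $v''_{ij}$ coincide with the defining formula for $\mathsf{SV}(x,\theta_1+\theta_2)$. Let $s_{\mean},s_{\max},v_{\mean},v_{\max}$ be the means and maxima for $x$ and $s'_{\mean},s'_{\max},v'_{\mean},v'_{\max}$ those for $x'$, and let $\mathsf{MAX}$ and $\mathsf{MAX}'$ be the two normalizers that appear when $\mathsf{SV}(\cdot,\theta_1)$ and $\mathsf{SV}(\cdot,\theta_2)$ are applied, respectively.

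The first step is to record how the relevant statistics of $x'$ relate to those of $x$. From the definition, $s'_{ij} = (s_{ij} + (2^{\theta_1^S}-1)s_{\mean})/\mathsf{MAX}$ is an affine function of $s_{ij}$ with positive slope $1/\mathsf{MAX}$ (here $\mathsf{MAX}>0$ because $\theta_1 \in \mathbb{R}_{\geq 0}^2$ makes the coefficients $2^{\theta_1^S}-1,\,2^{\theta_1^V}-1$ nonnegative and the pre-shift maxima are positive after normalization). Hence linearity of the spatial average gives $s'_{\mean} = 2^{\theta_1^S}s_{\mean}/\mathsf{MAX}$, and monotonicity of the map gives $s'_{\max} = (s_{\max}+(2^{\theta_1^S}-1)s_{\mean})/\mathsf{MAX}$, with the analogous formulas for $v'$. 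Substituting these into $\mathsf{MAX}' = \max(s'_{\max}+(2^{\theta_2^S}-1)s'_{\mean},\, v'_{\max}+(2^{\theta_2^V}-1)v'_{\mean})$ and using the identity $(2^{\theta_1^S}-1) + (2^{\theta_2^S}-1)2^{\theta_1^S} = 2^{\theta_1^S+\theta_2^S}-1$ (and its $V$-analogue) yields $\mathsf{MAX}' = \max(s_{\max}+(2^{\theta_1^S+\theta_2^S}-1)s_{\mean},\, v_{\max}+(2^{\theta_1^V+\theta_2^V}-1)v_{\mean})/\mathsf{MAX}$.

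Finally, I would plug $s'_{ij}$, $s'_{\mean}$ and $\mathsf{MAX}'$ into $s''_{ij} = (s'_{ij} + (2^{\theta_2^S}-1)s'_{\mean})/\mathsf{MAX}'$; the same exponent identity collapses the numerator to $(s_{ij}+(2^{\theta_1^S+\theta_2^S}-1)s_{\mean})/\mathsf{MAX}$ and the two copies of $1/\mathsf{MAX}$ cancel, leaving $s''_{ij} = (s_{ij}+(2^{\theta_1^S+\theta_2^S}-1)s_{\mean})/\max(s_{\max}+(2^{\theta_1^S+\theta_2^S}-1)s_{\mean},\, v_{\max}+(2^{\theta_1^V+\theta_2^V}-1)v_{\mean})$, which is exactly the $s$-component of $\mathsf{SV}(x,\theta_1+\theta_2)$ since $(\theta_1+\theta_2)^S = \theta_1^S+\theta_2^S$; the identical computation handles $v''_{ij}$, and the hue channel is unchanged, so $x'' = \mathsf{SV}(x,\theta_1+\theta_2)$.

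There is no conceptual obstacle here — the proof is purely computational — but the one point that needs care is that the spatial mean and maximum genuinely commute with the per-pixel affine rescaling by $1/\mathsf{MAX}$, which requires $\mathsf{MAX}>0$; this is where the restriction $\theta \in \mathbb{R}_{\geq 0}^2$ enters. Everything else reduces to the bookkeeping identity $(2^a-1)+(2^b-1)2^a = 2^{a+b}-1$, which is precisely what makes this ``mean-shift'' parametrization additively composable, playing the same role that $2^{a}\cdot 2^{b}=2^{a+b}$ plays in the proof for $\mathsf{CS}$.
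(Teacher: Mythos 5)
Your proof is correct and follows essentially the same route as the paper's: compute $s'_{\mean}$, $s'_{\max}$ (and $v'$ analogues) from the affine per-pixel update, substitute into $\mathsf{MAX}'$, collapse via $(2^a-1)+(2^b-1)2^a = 2^{a+b}-1$, and then substitute into $s''_{ij}$, $v''_{ij}$. The one thing you add beyond the paper is explicitly flagging that $\mathsf{MAX}>0$ is what lets the spatial mean and max commute with the per-pixel rescaling, and that this is where $\theta\in\mathbb{R}^2_{\geq 0}$ is used — a small but genuine improvement in rigor over the paper's more terse calculation.
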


\begin{proof}
Let $x = \{(h, s, v)_{ij} \}^{H \times W}, x' = \{(h, s', v')_{ij} \}^{H \times W} = \mathsf{SV}(x, \theta_1)$ and $x'' = \{(h, s'', v'')_{ij} \}^{H \times W} = \mathsf{SV}(x', \theta_2)$ in HSV format. We need to show that $x'' = \mathsf{SV}(x, \theta_1 + \theta_2)$.
Let $s_{\mean}, s_{\max}, v_{\mean}$ and $v_{\max}$ be the means and maximums of the saturation and brightness values of $x$ and $s'_{\mean}, s'_{\max}, v'_{\mean}$ and $v'_{\max}$ be the same for $x'$.
From the definition of $\mathsf{SV}$ in section~\ref{sec:brightness-saturation}, we have:
\begin{align*}
s'_{ij} = \frac{s_{ij} + (2^{\theta_1^S} - 1) s_{\mean}}{\mathsf{MAX}}, \quad v'_{ij} = \frac{v_{ij} + (2^{\theta_1^V} - 1) v_{\mean}}{\mathsf{MAX}}\\
\text{and} \quad s''_{ij} = \frac{s'_{ij} + (2^{\theta_2^S} - 1) s'_{\mean}}{\mathsf{MAX'}}, \quad v''_{ij} = \frac{v'_{ij} + (2^{\theta_2^V} - 1) v'_{\mean}}{\mathsf{MAX'}}
\end{align*}
where $\mathsf{MAX} = \max(s_{\max} + (2^{\theta_1^S} - 1) s_{\mean}, v_{\max} + (2^{\theta_1^V} - 1) v_{\mean})$ and $\mathsf{MAX'} = \max(s'_{\max} + (2^{\theta_2^S} - 1) s'_{\mean}, v'_{\max} + (2^{\theta_2^V} - 1) v'_{\mean})$.
From the definitions of $s'_{\mean}$ and $s'_{\max}$, we have:
\begin{align*}
s'_{\mean} &= \mean{s'_{ij}} = \mean{ \frac{s_{ij} + (2^{\theta_1^S} - 1) s_{\mean}}{\mathsf{MAX}} } = \frac{\mean{s_{ij}} + (2^{\theta_1^S} - 1) s_{\mean}}{\mathsf{MAX}} = \frac{2^{\theta_1^S} s_{\mean}}{\mathsf{MAX}}\\
s'_{\max} &= \max{s'_{ij}} = \max{ \frac{s_{ij} + (2^{\theta_1^S} - 1) s_{\mean}}{\mathsf{MAX}} } = \frac{\max{s_{ij}} + (2^{\theta_1^S} - 1) s_{\mean}}{\mathsf{MAX}} = \frac{s_{\max} + (2^{\theta_1^S} - 1) s_{\mean}}{\mathsf{MAX}}.
\end{align*}
Similarly,
\[v'_{\mean} = \frac{2^{\theta_1^V} v_{\mean}}{\mathsf{MAX}} \quad \text{and} \quad v'_{\max} = \frac{v_{\max} + (2^{\theta_1^V} - 1) v_{\mean}}{\mathsf{MAX}}.\]
Therefore,
\begin{align*}
    \mathsf{MAX'} &= \max(s'_{\max} + (2^{\theta_2^S} - 1) s'_{\mean}, v'_{\max} + (2^{\theta_2^V} - 1) v'_{\mean})\\
    &= \max(\frac{s_{\max} + (2^{\theta_1^S} - 1) s_{\mean} + (2^{\theta_2^S} - 1) 2^{\theta_1^S} s_{\mean}}{\mathsf{MAX}}, v'_{\max} + (2^{\theta_2^V} - 1) v'_{\mean})\\
    &= \max(\frac{s_{\max} + (2^{\theta_1^S + \theta_2^S} - 1) s_{\mean}}{\mathsf{MAX}}, v'_{\max} + (2^{\theta_2^V} - 1) v'_{\mean})\\
    &= \max(s_{\max} + (2^{\theta_1^S + \theta_2^S} - 1) s_{\mean}, v_{\max} + (2^{\theta_1^V} - 1) v_{\mean} + (2^{\theta_2^V} - 1) 2^{\theta_1^V} v_{\mean})/\mathsf{MAX}\\
    &= \max(s_{\max} + (2^{\theta_1^S + \theta_2^S} - 1) s_{\mean}, v_{\max} + (2^{\theta_1^V + \theta_2^V} - 1) v_{\mean})/\mathsf{MAX}.
\end{align*}
Substituting $s'_{ij}, s'_{\mean}$ and $\mathsf{MAX'}$ in the expression for $s''_{ij}$, we get:
\begin{align*}
    s''_{ij} &= \frac{s_{ij} + (2^{\theta_1^S} - 1) s_{\mean} + (2^{\theta_2^S} - 1) 2^{\theta_1^S} s_{\mean}}{\mathsf{MAX'} \mathsf{MAX}}\\
    &= \frac{s_{ij} + (2^{\theta_1^S + \theta_2^S} - 1) s_{\mean}}{\max(s_{\max} + (2^{\theta_1^S + \theta_2^S} - 1) s_{\mean}, v_{\max} + (2^{\theta_1^V + \theta_2^V} - 1) v_{\mean})}.
\end{align*}

Similarly,
\[v''_{ij}= \frac{v_{ij} + (2^{\theta_1^V + \theta_2^V} - 1) v_{\mean}}{\max(s_{\max} + (2^{\theta_1^S + \theta_2^S} - 1) s_{\mean}, v_{\max} + (2^{\theta_1^V + \theta_2^V} - 1) v_{\mean})}.\]
Hence, $x'' = \mathsf{SV}(x, \theta_1 + \theta_2)$.
\end{proof}

\section{Details for Plots in Figure~\ref{fig:undefVScert}}
\label{sec:gen-nat-dist}
The distribution shifts used to evaluate the empirical performance of the base models in figure~\ref{fig:undefVScert} have been generated by first sampling an image $x$ from the original distribution $\mathcal{D}$ and then randomly transforming it images from the original distribution by adding a noise in the corresponding transformation space.
The Wasserstein bound of these shifts can be calculated by computing the expected perturbation size of the smoothing distribution.
For example, the expected $\ell_2$-norm of a 3-dimensional Gaussian vector is given by $2 \sqrt{2} \sigma / \sqrt{\pi}$ and expected $\ell_1$-norm a 2-dimensional vector sampled uniformly from $[0, b]^2$ is $b$.

The training and smoothing noise levels used for color shift, hue shift and SV shift are (0.8, 10.0), ($180 \degree$, $180 \degree$) and (8.0, 12.0) respectively.

\section{Hue Shift}
\label{sec:apx-hue-plots}

Define a hue shift of an $H \times W$ sized image $x$ by an angle $\theta \in [-\pi, \pi]$
in the HSV space as:
\begin{align*}
\mathsf{HS}(x, \theta) &= \left\{\left(w(h + \theta), s, v \right)_{ij} \right\}^{H \times W}\\
\text{where} \quad w(x) &= x - 2 \pi \left\lfloor \frac{x}{2 \pi} \right\rfloor
\end{align*}
which rotates each hue value by an angle $\theta$ and wraps it around to the $[0, 2\pi)$ range.
It is easy to show that this transformation satisfies additive composability in condition~(\ref{eq:add_comp}).
The Wasserstein distance is defined using the corresponding distance function $d_{\mathsf{HS}}$ by taking the absolute value of the hue shift $|\theta|$.

\begin{lemma}
The transformation $\mathsf{HS}$ satisfies the additive composability property, i.e., $\forall x \in M, \theta_1, \theta_2 \in [-\pi, \pi]$,
\[\mathsf{HS}(\mathsf{HS}(x, \theta_1), \theta_2) = \mathsf{HS}(x, \theta_1 + \theta_2).\]
\end{lemma}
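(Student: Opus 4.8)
The plan is to reduce the claim to a one-dimensional statement about the wrapping function $w(t) = t - 2\pi\lfloor t/2\pi\rfloor$. Both $\mathsf{HS}(\cdot,\theta_1)$ and the outer $\mathsf{HS}(\cdot,\theta_2)$ leave the saturation and brightness channels of every pixel untouched, as does $\mathsf{HS}(x,\theta_1+\theta_2)$, so it suffices to check equality of the hue channels pixel by pixel. For a pixel with original hue $h$, the left-hand side $\mathsf{HS}(\mathsf{HS}(x,\theta_1),\theta_2)$ produces hue $w(w(h+\theta_1)+\theta_2)$, while the right-hand side produces $w(h+\theta_1+\theta_2)$; hence the whole lemma follows once the identity $w(w(a)+b) = w(a+b)$ is established for all $a,b\in\mathbb{R}$.

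To prove that identity I would first record the floor shift identity $\lfloor t+n\rfloor = \lfloor t\rfloor + n$ for $t\in\mathbb{R}$ and $n\in\mathbb{Z}$, immediate from the definition of the floor. Then, setting $k=\lfloor a/2\pi\rfloor\in\mathbb{Z}$ so that $w(a) = a - 2\pi k$, one has $w(a)+b = (a+b) - 2\pi k$, and applying $w$ together with the floor shift identity (with $n=-k$) collapses $w((a+b)-2\pi k)$ to $w(a+b)$. Equivalently, $w$ is the canonical section of the quotient $\mathbb{R}\to\mathbb{R}/2\pi\mathbb{Z}$ onto $[0,2\pi)$, and the identity is just the statement that reduction modulo $2\pi$ commutes with addition; additive composability in condition~(\ref{eq:add_comp}) then follows immediately.

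The only point requiring a word of care — bookkeeping rather than a genuine obstacle — is that $\theta_1,\theta_2\in[-\pi,\pi]$ only gives $\theta_1+\theta_2\in[-2\pi,2\pi]$, which may fall outside the nominal parameter interval. Since $w$ (and the computation above) is defined for arbitrary real shifts, the stated equality holds verbatim; if one insists on a parameter space genuinely closed under addition, the natural choice is $\mathbb{R}$ (or $\mathbb{R}/2\pi\mathbb{Z}$ equipped with the circular metric used to define $d_{\mathsf{HS}}$), and I would simply flag this alongside the proof rather than let it complicate the algebra.
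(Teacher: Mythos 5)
Your proof is correct and follows the same route the paper takes: reduce to the hue channel pixel-by-pixel, then prove the one-dimensional identity $w(w(a)+b)=w(a+b)$ via the floor-shift identity $\lfloor t+n\rfloor=\lfloor t\rfloor+n$. In fact, the paper's own proof of this lemma is truncated — it expands $w(w(h+\theta_1)+\theta_2)$ only one step and then stops — so your argument actually completes what the paper leaves unfinished, and your remark that $\theta_1+\theta_2$ may exit $[-\pi,\pi]$ (though $w$ is defined on all of $\mathbb{R}$, so nothing breaks) is a reasonable bit of bookkeeping the paper does not address.
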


\begin{proof}
Let $h$ be the hue value of the $(i, j)$th pixel of the image $x$.
Since the transformation only affects the hue values, we ignore the other coordinates.
The hue value after the transformation $\mathsf{HS}(\mathsf{HS}(x, \theta_1), \theta_2)$ is given by
\begin{align*}
w(w(h + \theta_1) + \theta_2) &= w \left( h+\theta_1 - 2\pi \left\lfloor \frac{h+\theta_1}{2\pi} \right\rfloor + \theta_2\right)
\end{align*}
\end{proof}

Define a smoothing distribution that applies a random hue rotation $\delta$ sampled uniformly from the range $[-\pi, \pi]$.
Since $\mathsf{HS}$ wraps the hue values around in the interval, the distributions of $h + \delta$ and $(h + \theta) + \delta$ for two hue values shifted by an angle $\theta$ are both uniform in $[0, 2 \pi]$.
Thus, the smoothing distribution for two hue shifted images is the same which implies that $\psi(d(x_1, x_2)) = 0$ whenever $d(x_1, x_2)$ is finite.
Hence, from Theorem~\ref{thm:dist-robust}, we have $\mathbb{E}_{(x_2, y_2) \sim \mathcal{\tilde{D}}} [\bar{h}(x_2, y_2)] \geq \mathbb{E}_{(x_1, y_1) \sim \mathcal{D}} [\bar{h}(x_1, y_1)]$ for hue shifts.
Since, the certified accuracy remains constant with respect to the Wasserstein distance of the shift, we just plot the certified accuracies obtained by various base models trained under different noise levels in figure~\ref{fig:cert-acc-HS}.
We plot the certified accuracies obtained by various models trained using random hue rotations picked uniformly from the range $[-\beta, \beta]$ for different values of the maximum angle $\beta$ in the range.
The certified accuracy roughly increases with the training noise achieving a maximum of 87.9\% for a max angle $\beta = 180 \degree$ for the training noise level.

\begin{figure}[t]
    \centering
    \includegraphics[width=0.5\textwidth, trim={5mm 2mm 12mm 6mm}, clip]{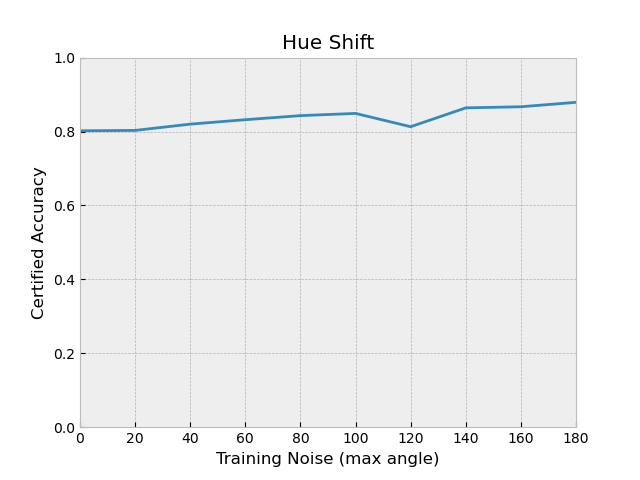}
    \caption{Certified accuracy under hue shift for different levels of training noise. Since, the certified accuracy remains constant with respect to the Wasserstein distance ($\epsilon$) of the shifted distribution, we plot the certified accuracy of models trained with different noise levels $\beta$.}
    \label{fig:cert-acc-HS}
\end{figure}

\section{Random Channel Selection}
\label{sec:random-channel}
Consider a smoothing distribution that randomly picks one of the RGB channels with equal probability, scales it so that the maximum pixel value in that channel is one and sets all the other channels to zero.
This smoothing distribution is invariant to the color shift transformation $\mathsf{CS}$ and thus, satisfies $\psi(d_{\mathcal{T}}(x_1, x_2)) = 0$ whenever $d_{\mathcal{T}}(x_1, x_2)$ is finite.
Therefore, from theorem~\ref{thm:dist-robust}, we have $\mathbb{E}_{z \sim \mathcal{\tilde{D}}} [\bar{h}(z)] \geq \mathbb{E}_{x \sim \mathcal{D}} [\bar{h}(x)]$ under this smoothing distribution for all Wasserstein bounds $\epsilon$ with respect to $d_{\mathsf{CS}}$.
Figure~\ref{fig:random_channel} plots the certified accuracies, using random channel selection for smoothing, achieved by models trained using Gaussian distributions of varying noise levels in the transformation space.
The certified accuracy roughly increases with the training noise achieving a maximum of 87.1\% for a training noise of 0.8.

\section{Experiment Details for Section \ref{sec:adv-attacks}}
As mentioned, for the certified models, we use the released pre-trained ResNet110 models from \cite{cohen19}, using the same level of Gaussian Noise for training and testing. For empirical results, we use the implementation of the $\ell_2$ Carlini and Wagner \cite{Carlini017} attack provided by the IBM ART package \cite{art2018} with default parameters (except for batch size which we set at 256 to increase processing speed.)

\begin{figure}[t]
    \centering
    \includegraphics[width=0.5\textwidth, trim={5mm 2mm 12mm 6mm}, clip]{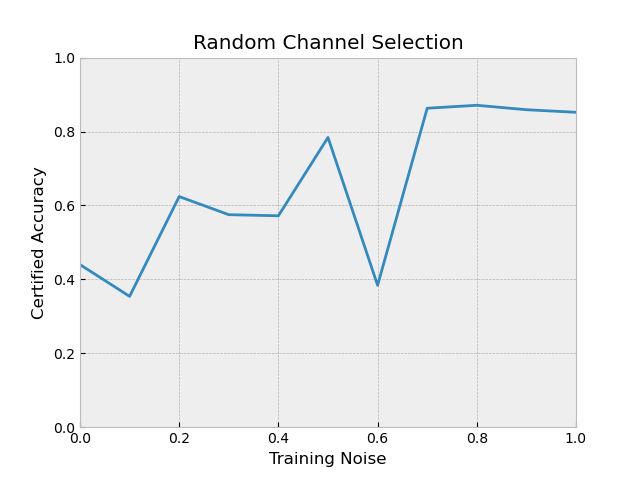}
    \caption{Certified robustness against color shift using random channel selection as the smoothing distribution. Since, the certified accuracy remains constant with respect to the Wasserstein distance ($\epsilon$) of the shifted distribution, we plot the certified accuracy of models trained with various levels of Gaussian noise in the transformation space.}
    \label{fig:random_channel}
\end{figure}

\section{Experiment details for Section \ref{sec:unlearnability}}
Our experimental setting is adapted from the ``sample-wise perturbation'' CIFAR-10 experiments in \cite{huang2021unlearnable}: hyperparameters are the same as in that work unless otherwise stated. 
For background, \cite{huang2021unlearnable} creates an unlearnable dataset by performing the following ``bi-level'' minimization, to simultaneously train a proxy classifier model and create unlearnable examples:
\begin{equation} \label{eq:min_min_attack}
    \min_\theta \min_ {(\epsilon_1, ... ,\epsilon_n)} \frac{1}{n} \sum_{i=1}^n \mathcal{L} (f_\theta(x_i + \epsilon_i), y_i)
\end{equation}
In other words, in contrast with standard training, both the samples and the proxy classifier are optimized to decrease the loss. New classifiers trained on the resulting samples fail to generalize to unperturbed samples. In the experiments, as in \cite{huang2021unlearnable}, the inner minimization over perturbations is performed for 20 steps over the entire dataset, for every one batch update step of the outer minimization. Training stops when training accuracy reaches a threshold value of 99\%.

We now detail differences in experimental setup from \cite{huang2021unlearnable}:
\subsection{Adaptation to $\ell_2$ attack setting}
After each optimization step, we project $\epsilon_i$'s into an $\ell_2$ ball (of radius given by the Wasserstein bound $\epsilon$) rather than an $\ell_\infty$ ball. We also use an $\ell_2$ PGD step:
\begin{equation} 
    \epsilon_i' =  \epsilon_i + \tau \frac{{\nabla}_{\epsilon_i} \mathcal{L}(\cdot)}{\|{\nabla}_{\epsilon_i} \mathcal{L}(\cdot)\|_2}
\end{equation}
Step size $\tau$ was set as 0.1 times the total $\ell_2$ $\epsilon$ bound.
\subsection{Adaptation to offline setting}
As discussed in the test, we modify the algorithm such that the simultaneous training of the proxy model and generation of perturbations does not introduce statistical dependencies between perturbed training samples. This is especially important because, if the victim later makes a train-validation split, this would introduce statistical dependencies  between training and validation samples, making it hard to generalize certificates to a test set.

To avoid this, we construct four data splits:
\begin{itemize}
\item Test set (10000 samples): The original CIFAR-10 test set. Never perturbed, only used in final model evaluation.
\item Proxy training set (20000 samples): Used for the optimization of the proxy classifier model parameters $\theta$ in Equation \ref{eq:min_min_attack} and discarded afterward.
\item Training set (20000 samples): Perturbed using one round of the the standard 20 steps of the inner optimization of Equation  \ref{eq:min_min_attack}, while keeping $\theta$ fixed.
\item Validation set (10000 samples): Perturbed using the same method as the ``Training set.''
\end{itemize}
The victim model is trained on the ``Training Set'' and evaluated on the ``Validation set'' and ``Test set''. We also tested on the clean (unperturbed) version of the validation set.
\subsection{Adaptive attack setting}
When testing our smoothing algorithm, we tested two types of attacks:
\begin{itemize}
    \item Non-adaptive attack: the proxy model is trained and perturbations are generated using undefended models without smoothing: only the victim policy applies smoothing noise during training and evaluation.
    \item Adaptive attack: In the minimization of Equation \ref{eq:min_min_attack}, the loss term $\mathcal{L} (f_\theta(x_i + \epsilon_i), y_i)$ is replaced by the expectation:
    \begin{equation}
       \mathop \mathbb{E}_{\delta\sim\mathcal{N}(0,\sigma^2I)} \mathcal{L} (f_\theta(x_i + \epsilon_i + \delta), y_i)
    \end{equation}
     In other words, this models the expectation of a \textit{smoothed} model, like the victim classifier. This smoothed optimization is used in both the proxy model training, as well as the generation of the training and validation sets. Following \cite{SalmanLRZZBY19}, which proposed a similar adaptive attack for sample-wise smoothed classifiers we approximate the expectation using a small number of random perturbations, which are held fixed for the 20 steps of the inner optimization. In our experiments, we use 8 samples for approximation. Because, at large smoothing noises, this makes the attack much less effective, we cut off training after 20 steps of the outer maximization, rather than relying on the accuracy to reach 99\%. (the maximum number of steps required to converge we observed for the  non-adaptive attack was 15).
\end{itemize}
\subsection{Results}
Complete experimental results are presented in Figure \ref{fig:unlearnability_appdx}. All results are means of 5 independent runs, and error bars represent standard errors of the means.
\begin{figure}
    \centering
    \includegraphics[height=\textheight]{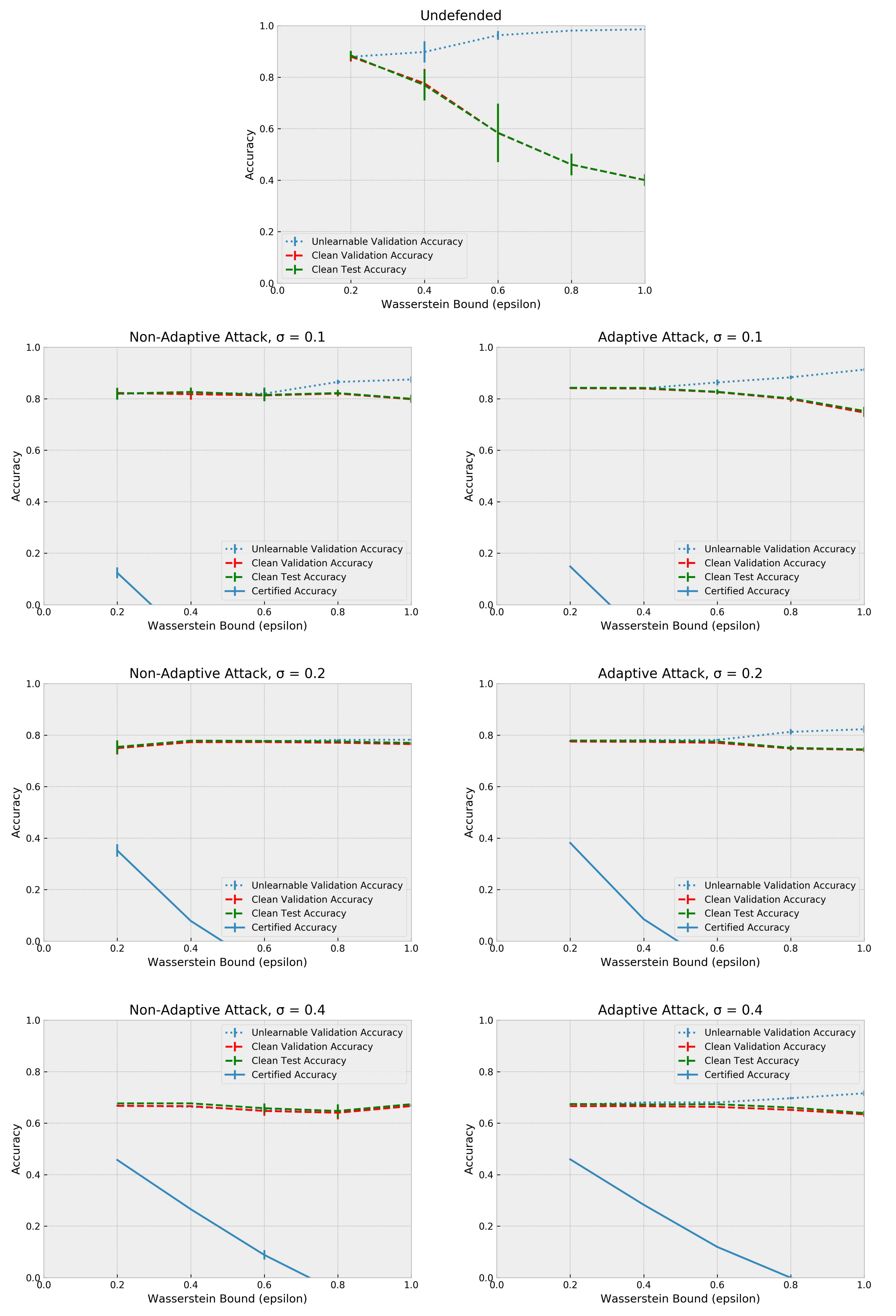}
    \caption{Complete Experimental results for unlearnability experiments.}
    \label{fig:unlearnability_appdx}
\end{figure}
\newpage
\section{Empirical Attacks on $\ell_2$-distributional robustness.}
\label{sec:emp_attacks_l2}
In this section, we describe an empirical attack on $\ell_2$-distributional smoothing. Our attack is based on the attack from \cite{SalmanLRZZBY19}, and we use the code for PGD attack against smoothed classifiers from that work as a base, but there are a few considerations we must make.

\begin{figure}
    \centering
    \includegraphics[width=\textwidth]{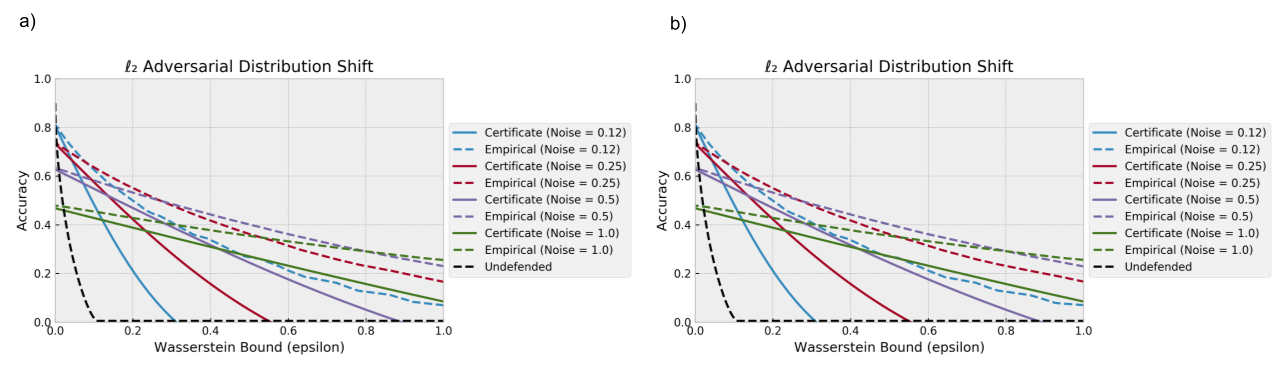}
    \caption{Adversarial attack on distributionally-smoothed classifiers, for CIFAR-10. For smoothed classifiers, we us the PGD attack described  in this section; see main text for details on the  baseline. The dashed lines represent an upper bound on the empirical Wasserstein distances. In plot (a), we use the loss function in Equation \ref{eq:loss_a}, while in (b) we use Equation \ref{eq:loss_b}. }
    \label{fig:adv_attack_smoothed}
\end{figure}

First, while the goal of the attacker in \cite{SalmanLRZZBY19} is to change the output of a classifier that uses the \textit{expected} logits, the goal in our case is to instead reduce the average classification accuracy of \textit{each noise instance}. Concretely, \cite{SalmanLRZZBY19} uses an attacker loss function for each sample $x, y$  of the following form:

    \begin{equation}
        \max_\epsilon \mathcal{L}_{\text{Cross Ent.}} \left( \mathop \mathbb{E}_{\delta\sim\mathcal{N}(0,\sigma^2I)}[\tilde{f}_\theta(x + \epsilon + \delta)], y\right)\label{eq:loss_a}
    \end{equation} 
Where we use $\tilde{f}$ to represent the SoftMax-ed logit function. However, because in our case, the classifier under attack is \textit{not} $\mathbb{E}_{\delta\sim\mathcal{N}(0,\sigma^2I)}[\tilde{f}_\theta(x + \epsilon + \delta)]$, but rather $\tilde{f}_\theta(x + \epsilon + \delta)$ itself, we instead considered the loss function: 
    \begin{equation}
        \max_\epsilon \mathop \mathbb{E}_{\delta\sim\mathcal{N}(0,\sigma^2I)} \left[ \mathcal{L}_{\text{Cross Ent.}} \left( \tilde{f}_\theta(x + \epsilon + \delta), y\right) \right] \label{eq:loss_b}
    \end{equation} 
Empirically, we find the choice of loss function to make very little difference: see Figure \ref{fig:adv_attack_smoothed}. 

We also must consider how to correctly make the attacker ``strategic'': that is, how to allocate attack magnitude so as to attack most effectively while minimizing Wasserstein distance. This is more difficult than in the undefended case, because it is no longer true that for each sample $x$, we can identify the magnitude $\|CW(x, y; g) - x\|_2$ such that an attack of this magnitude is guaranteed to be successful, while a smaller attack is unsuccessful and hence is not attempted. Rather, for a given attack magnitude, there is instead a \textit{probability of success}, over the distribution of $\delta$.

In order to deal with this, we perform PGD at a range of attack magnitudes, specifically $E = \{i/8| i  \in \{1,...,16\}\}$.
Let $PGD_e(x,y;g)$ be the result of the attack at magnitude $e\in E$.
We then define the adaptive attacker as:

\begin{equation}
    \text{Adv}_\gamma(x) := PGD_{e*}(x,y;g)
\end{equation}
Where:
\begin{equation}
\begin{split}
     &e* := \max e \in E \text{ such that  }  \\
     &\frac{\mathop \mathbb{E}_{\delta} \left[ \mathcal{L}_{0/1} \left( \tilde{f}_\theta( PGD_{e}(x,y;g) + \delta), y\right) \right]  - \mathop\mathbb{E}_{\delta} \left[ \mathcal{L}_{0/1} \left( \tilde{f}_\theta( x + \delta), y\right) \right]}{e} > \gamma
\end{split} \label{eq:efficent_attack_smoothed}
\end{equation}
In other words, we use the largest attack such that the \textit{increase in misclassification rate per unit attack magnitude} is above the threshold $\gamma$. If this is not the case for any $e\in E$, we elect not to attack, and set $\text{Adv}_\gamma(x) := x$. As was described in the main text for the baseline case, we sweep over a range of threshold values $\gamma$ when reporting results. When evaluating the  expectations in Equation \ref{eq:efficent_attack_smoothed}, we use a sample of 100 noise instances. However, once $e*$ is identified, we then use a \textit{different}  sample of 100 noise instances per training sample $x$ when reporting the final accuracy: this is to de-correlate the attack generation of  $\text{Adv}_\gamma(x)$ with the evaluation of the attack.  (However, noise instances are kept constant over the sweep  of $\gamma$). When reporting results (the upper bounds of empirical Wasserstein distances), we use  $e*$  as an  upper bound on $\|PGD_{e*}(x,y;g) -x\|_2$, rather than using $\|PGD_{e*}(x,y;g) -x\|_2$ directly.

Attack hyperparameters are taken from \cite{SalmanLRZZBY19}: We use 20 attack steps, a step size of $e/10$, and use 128 noise instances when computing gradients. We evaluate using 10\% of the CIFAR-10 test set. 

\end{document}